\documentclass[11pt]{article}

\usepackage[preprint]{acl}

\usepackage{times}
\usepackage{latexsym}

\usepackage[T1]{fontenc}

\usepackage[utf8]{inputenc}

\usepackage{microtype}

\usepackage{graphicx}
\usepackage{subcaption}
\usepackage{arydshln} 
\usepackage{booktabs} 
\usepackage{float}
\usepackage{url}
\usepackage{multirow}
\usepackage{siunitx}
\usepackage{longtable} 
\usepackage{pifont}
\newcommand{\cmark}{\ding{51}} 
\newcommand{\xmark}{\ding{55}} 
\usepackage[most]{tcolorbox}
\usepackage{enumitem} 
\usepackage{colortbl}
\usepackage{placeins}
\usepackage{listings}

\usepackage{amsmath,amsfonts,bm}

\def\eqref#1{equation~\ref{#1}}

\def\1{\bm{1}}

\DeclareMathAlphabet{\mathsfit}{\encodingdefault}{\sfdefault}{m}{sl}
\SetMathAlphabet{\mathsfit}{bold}{\encodingdefault}{\sfdefault}{bx}{n}

\definecolor{idealcol}{RGB}{255, 246, 218} 

\newcommand{\revised}[1]{#1}

\usepackage{amsmath}
\usepackage{amssymb}
\usepackage{mathtools}
\usepackage{amsthm}

\usepackage[capitalize,noabbrev]{cleveref}

\usepackage[textsize=tiny]{todonotes}

\theoremstyle{plain}
\newtheorem{theorem}{Theorem}[section]
\newtheorem{proposition}[theorem]{Proposition}

\theoremstyle{definition}

\theoremstyle{remark}

\title{Route Before Retrieve: Activating Latent Routing Abilities of LLMs \\ for RAG vs. Long-Context Selection}

\author{
  Yiwen Chen\textsuperscript{1} \quad
  Kuan Li\textsuperscript{2} \quad
  Fuzhen Zhuang\textsuperscript{1} \quad
  Deqing Wang\textsuperscript{1} \quad
  Zhao Zhang\textsuperscript{1} \\[4pt]
  Liwen Zhang\textsuperscript{3} \quad
  Yong Jiang\textsuperscript{3} \quad
  Shuai Wang\textsuperscript{2} \quad
  Minhao Cheng\textsuperscript{4} \\[6pt]
  \textsuperscript{1}Beihang University \quad
  \textsuperscript{2}HKUST \quad
  \textsuperscript{3}Alibaba Group \quad
  \textsuperscript{4}Pennsylvania State University \\[4pt]
  \texttt{yiwenchen@buaa.edu.cn, likuan@connect.ust.hk, zhuangfuzhen@buaa.edu.cn}
}

\begin{document}

\maketitle

\begin{abstract}
Recent advances in large language models (LLMs) have expanded the context window to beyond 128K tokens, enabling long-document understanding and multi-source reasoning. A key challenge, however, lies in choosing between \textbf{retrieval-augmented generation (RAG)} and \textbf{long-context (LC)} strategies: RAG is efficient but constrained by retrieval quality, while LC supports global reasoning at higher cost and with position sensitivity. Existing methods such as \textit{Self-Route} adopt failure-driven fallback from RAG to LC, but remain passive, inefficient, and hard to interpret. We propose \textbf{Pre-Route}, a proactive routing framework that performs structured reasoning \emph{before} answering. Using lightweight metadata (e.g., document type, length, initial snippet), Pre-Route enables task analysis, coverage estimation, and information-need prediction, producing explainable and cost-efficient routing decisions. Our study shows three key findings: (i) LLMs possess latent routing ability that can be reliably elicited with guidelines, allowing single-sample performance to approach that of multi-sample (Best-of-N) results; (ii) linear probes reveal that structured prompts sharpen the separability of the ``optimal routing dimension'' in representation space; and (iii) distillation transfers this reasoning structure to smaller models for lightweight deployment. Experiments on LaRA (in-domain) and LongBench-v2 (OOD) confirm that Pre-Route outperforms Always-RAG, Always-LC, and Self-Route baselines, achieving superior overall cost-effectiveness.
\end{abstract}

\section{Introduction}
\vspace{-8pt}
In recent years, Large Language Models (LLMs) have shown unprecedented capabilities in processing long contexts. The latest models—such as GPT-5, DeepSeek R1, and Qwen3—support inputs beyond 128K tokens ~\citep{openai2025introducinggpt5,deepseek2025r10528,DBLP:journals/corr/abs-2505-09388_qwen3report}, enabling tasks like long-document comprehension, cross-segment reasoning, and multi-source synthesis. Yet, how to efficiently and accurately exploit these capabilities across tasks remains an open challenge~\citep{DBLP:journals/tacl/LiuLHPBPL24_lostinmiddle}. Two mainstream approaches have emerged: \textbf{Retrieval-Augmented Generation (RAG)}, which retrieves relevant snippets from external sources~\citep{DBLP:conf/nips/LewisPPPKGKLYR020_ragnlp}, and \textbf{Long Context (LC)} processing, where the full context is provided to the model for end-to-end reasoning~\citep{kamradt2023needle,DBLP:journals/corr/abs-2404-06654_ruler}.

These two pathways differ in suitability, each with distinct strengths and limitations. Benchmarks~\citep{DBLP:journals/corr/abs-2502-09977_lara,DBLP:conf/acl/BaiLZL0HDLZHDTL24_longbench,DBLP:conf/acl/AnG0ZLZKQ24_leval} show that LC excels at comparison and complex reasoning by leveraging a global view, while RAG is preferred for fact retrieval and hallucination-sensitive tasks, offering precise evidence and abstention. LC works well for well-structured texts (e.g., financial reports) but suffers from positional effects such as lost-in-the-middle problems ~\citep{DBLP:journals/tacl/LiuLHPBPL24_lostinmiddle}. Conversely, RAG adapts better to loosely structured texts (e.g., novels) and is less position-sensitive, though its effectiveness depends on retrieval quality. Thus, choosing between RAG and LC is a key challenge for advancing long-context processing.

Existing strategies such as Self-Route~\citep{DBLP:conf/emnlp/Li00MB24selfroute} adopt a failure-driven mechanism: attempt RAG first, and switch to LC only if the model outputs "unanswerable". While simple and sometimes effective, this approach has some drawbacks: it is passive, relying solely on retrieval failure signals; it incurs redundant overhead, since RAG retrievals must always be executed; it depends on the model's self-assessment, which may be either over-conservative or overconfident; and it offers little interpretability regarding routing decisions (see Appendix~\ref{app:selfroutefail}).
\begin{figure}[t]
    \centering
    \begin{subfigure}[t]{0.48\linewidth}
        \centering
        \includegraphics[width=\linewidth]{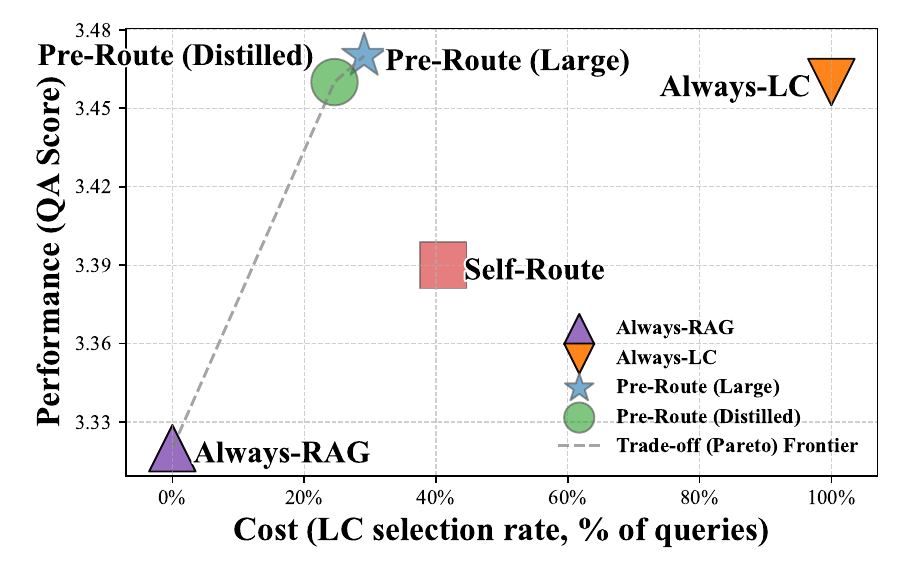}
        \caption{LaRA (in-domain)}
        \label{fig:lara_235b}
    \end{subfigure}
    \hfill
    \begin{subfigure}[t]{0.48\linewidth}
        \centering
        \includegraphics[width=\linewidth]{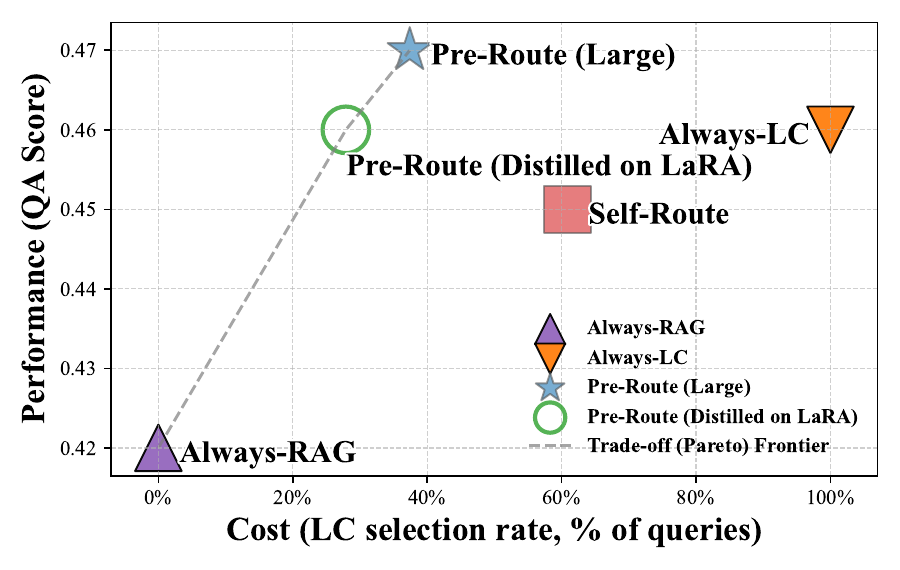}
        \caption{LongBench-v2 (OOD)}
        \label{fig:lb5_235b}
    \end{subfigure}
    \caption{Visualization of Pre-Route cost-effectiveness with Qwen3-235B-A22B:
(a) in-domain (LaRA), (b) out-of-domain (LongBench-v2).}
\vspace{-14pt}
    \label{fig:lara_lb5_compare}
\end{figure}
To overcome these issues, we introduce \textbf{Pre-Route}, a proactive routing framework. Our core insight is that LLMs possess a latent ``routing potential'' dormant in the high-dimensional parameter space. Pre-Route elicits this ability not by teaching new knowledge, but by constraining the decision space through a structured reasoning process, aligning internal attention with critical decision boundaries. Specifically, Pre-Route performs structured reasoning \textit{before} answering using low-cost metadata such as document type, title, length, and initial snippets. Furthermore, this process can be distilled into smaller models for lightweight deployment.

In summary, our contributions are:
\begin{enumerate}
\item We propose Pre-Route, an active and cost-efficient routing framework that introduces structured reasoning before retrieval and answering, implementing a "plan-then-execute" paradigm.
\item We systematically verify and uncover the latent routing capabilities of LLMs. Through BoN experiments and linear probe analysis, we demonstrate the existence of this ability and find that guidelines can effectively elicit and stabilize it, enabling single-sample performance to approach the multi-sample upper bound and forming a more linearly separable ``optimal route dimension'' in the representation space.
\item We successfully transfer this routing capability to smaller models, forging a lightweight, plug-and-play module. We find that while smaller models struggle to acquire this ability through prompting alone, they can effectively learn the Pre-Route reasoning patterns via distillation. This paves the way for efficient, low-cost deployment on edge and client-side devices.
\item We achieve significant advantages in both in-distribution and out-of-distribution (OOD) tasks. On the LaRA Benchmark and LongBench-v2 datasets, Pre-Route demonstrates superior effectiveness and generalization by outperforming Self-Route in both performance and efficiency.
\end{enumerate}

\vspace{-12pt}

\section{From Dormant to Active: Unlocking Latent Routing Potential}
\label{sec:latent}
\vspace{-8pt}
Existing methods like Self-Route ~\citep{DBLP:conf/emnlp/Li00MB24selfroute} treat routing as a retrieval fallback—RAG is always attempted first—implicitly assuming LLMs lack intrinsic planning. In this work, we challenge this assumption and ask a more fundamental question: \textbf{Do LLMs already possess a "latent routing ability"—the internal competence to discern task requirements and select an appropriate path?} To address this question, we adopt two complementary perspectives: (i) \textbf{behavioral experiments} that demonstrate both the existence of such latent ability and its instability, and (ii) \textbf{representation analysis} that probes the model's internal mechanisms to reveal how this ability can be elicited. Taken together, our argument is that routing ability is not absent but rather dormant, and can be reliably elicited by suitable guidelines. \revised{The routing guidelines were derived from prior studies and finalized on the LaRA training set; the test set remained unseen during prompt construction. All analyses below are post-hoc verifications conducted after prompts and models were frozen. Experimental setup details are in Sec.~\ref{sec:exp_setup}.}
\vspace{-12pt}

\begin{figure*}[tbp]
\vspace{-12pt}
    \centering
    \includegraphics[width=0.75\linewidth]{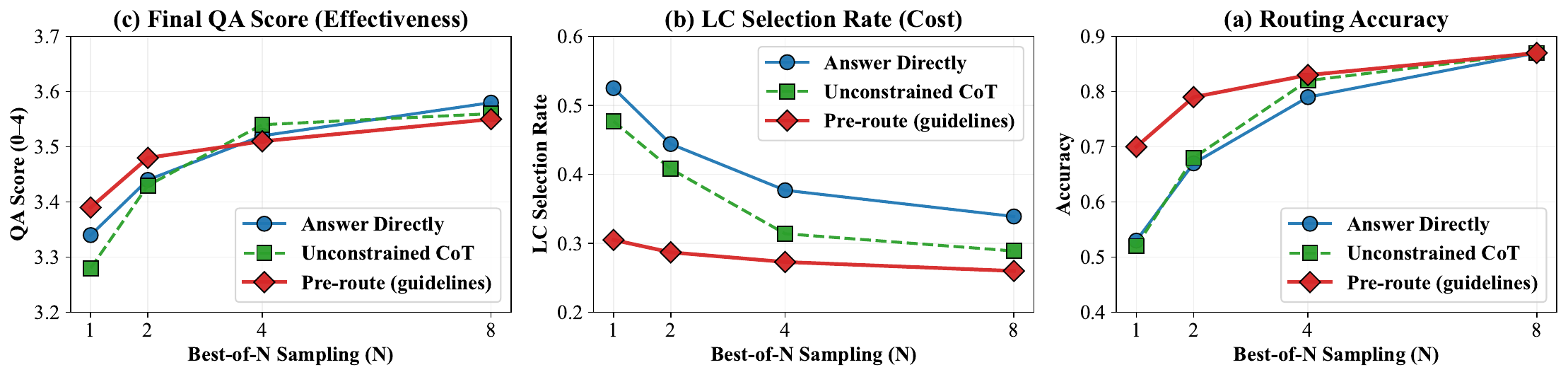}
    \vspace{-10pt}
    \caption{BoN comparison results tested with answer model Qwen3-235B-A22B.}
    \label{fig:bon_comparison}
\vspace{-14pt}
\end{figure*}

\subsection{Behavioral Evidence: Revealing Latent Ability via Best-of-N Sampling}

To disentangle whether routing failures arise from a lack of capacity or merely insufficient activation, we recast routing as a binary classification task (predicting RAG vs.\ LC). We adopt the \textbf{Best-of-N (BoN) sampling} methodology widely used for testing latent abilities.

We design three controlled comparisons \revised{under identical metadata}:
(1) Prompt Paradigm: \emph{answer directly}, \emph{unconstrained CoT} (free-form reasoning before routing), and our proposed \emph{Pre-Route} (structured reasoning guidelines).
(2) Sampling Strategy: $N \in \{1,2,4,8\}$.
(3) Metrics: QA Score, LC selection rate and routing accuracy.  We report \textbf{routing accuracy}, defined as the proportion of model decisions that match an \emph{ideal label}—the decision rule that maximizes QA performance while defaulting to the lower-cost RAG option when performances are comparable (formalized later in Sec.~\ref{sec:learn}).

Experimental results (Fig.~\ref{fig:bon_comparison}) clearly reveal the model's internal decision capacity. Under \emph{Answer Directly} and \emph{Unconstrained CoT}, routing accuracy shows a strong positive correlation with $N$. For instance, in the Answer Directly setting, accuracy rises from about 0.53 (N=1) to 0.87 (N=8), indicating that the knowledge required for correct routing is present but accessed stochastically: sometimes the model follows the correct path, other times it diverges. Increasing the number of samples simply raises the chance of hitting the right path, thereby exposing the model's latent ability.

In sharp contrast, introducing the \emph{Pre-Route (guidelines)} structured reasoning chain fundamentally changes the picture. Its curve is much flatter: accuracy already reaches 0.70 at N=1---substantially higher than the other two settings---and further climbs to 0.83 at N=4 before saturating at 0.87 by N=8. This pattern indicates that \textbf{structured guidance acts as a calibrator and stabilizer}. It does not inject new knowledge, but rather provides a clear reasoning scaffold that reliably elicits and directs the model's latent routing capability, enabling single-shot performance close to the model's upper bound.

\subsection{Representation Evidence: Probing the Decision Space}

\begin{table}[t]
  \caption{Linear probe accuracy tested with answer model Qwen3-235B-A22B.}
  \vspace{-8pt}
\centering
\small
\resizebox{\linewidth}{!}{
        \begin{tabular}{c c c
                        >{\columncolor{idealcol}}S[table-format=1.4]
                        S[table-format=1.4]
                        S[table-format=1.4]
                        S[table-format=1.4]}
        \toprule
        \multirow{2}{*}{\textbf{Model}} &
        \multirow{2}{*}{\textbf{Prompt}} &
        \multirow{2}{*}{\textbf{Distilled}} &
        \multicolumn{4}{c}{\textbf{Accuracy}} \\
        \cmidrule(l){4-7}
         & & & {\cellcolor{idealcol}Ideal} & {Route} & {Doc} & {Task} \\
        \midrule
        Qwen3-1.7B & Pre-Route       & \cmark & \bfseries \textbf{0.6389} & \bfseries \textbf{0.7986} & \textbf{0.3958} & 0.4097 \\
        Qwen3-1.7B & Pre-Route       & \xmark & 0.6250 & 0.7639 & 0.3330 & \textbf{0.4167} \\
        Qwen3-8B   & answer direct.  & \xmark & 0.5486 & 0.6597 & 0.2986 & 0.2569 \\
        Qwen3-1.7B & answer direct.  & \xmark & 0.5208 & 0.6389 & 0.3194 & 0.2500 \\
        Qwen3-8B   & unconstr.\ CoT  & \xmark & 0.4653 & 0.6944 & 0.3681 & 0.2500 \\
        Qwen3-1.7B & unconstr.\ CoT  & \xmark & 0.3958 & 0.5764 & 0.3472 & 0.2639 \\
        \bottomrule
        \end{tabular}}

\label{tab:linear_probe}
\vspace{-15pt}
\end{table}

While behavioral results confirm that routing ability can be elicited, an open question is what changes occur internally. We hypothesize that structured prompting not only alters outputs, but also reorganizes the representation space, such that routing signals become more linearly separable. To investigate this, we apply linear probes on frozen representations and examine four prediction targets: (i) the \emph{ideal} decision, defined in Sec.~\ref{sec:learn} as the \emph{task-optimal decision}—the choice that maximizes QA performance while defaulting to RAG when tied; (ii) the model's own \emph{route} choice, reflecting its internal bias; (iii) \emph{doc\_type} (7 classes), to test whether routing simply mirrors document categories; and (iv) \emph{task\_type} (4 classes), to test whether routing is merely driven by shallow task-level hints rather than deeper reasoning signals. Among these, ideal label accuracy is the most important indicator, since it directly evaluates whether the model's latent routing aligns with task-optimal decisions.

We apply \emph{linear probing}~\citep{DBLP:conf/iclr/AlainB17_linearprobe}: a linear classifier trained on frozen penultimate last token embeddings to predict the labels above. Linear probes are deliberately chosen since linear separability implies explicit encoding in the representation. 1) \textbf{Prompt structure is critical:} on Qwen3-1.7B, ideal-label accuracy improves from 0.396 (unconstr.\ CoT) to 0.521 (direct) to \textbf{0.625} (Pre-Route). 2) \textbf{Distillation further aligns:} adding distillation(see Sec.~\ref{sec:learn}) on Pre-Route raises accuracy to 0.639. 3) \textbf{Design over scale:} Qwen3-8B with direct prompting (0.549) underperforms Qwen3-1.7B with Pre-Route (0.625). 4) \textbf{Model's own routing decision:} under structured guidance, the model's representation of its own decisions becomes clearer, making probes better at predicting its \emph{route} choice (0.58--0.80), though not always aligned with the task-optimal choice. 5) \textbf{Not shallow heuristics:} low accuracy on \emph{doc\_type} and \emph{task\_type} shows routing is not reducible to trivial document or task categories.
Together, these findings confirm that optimal routing boundaries are already encoded in the representation space. BoN experiments demonstrate it behaviorally, while probing confirms it representationally. Crucially, structured guidance and light supervision reliably elicit and stabilize this ability—laying the foundation for the Pre-Route framework introduced next.

\begin{figure*}[!t]
\vspace{-8pt}
    \centering
    \includegraphics[width=0.95\linewidth]{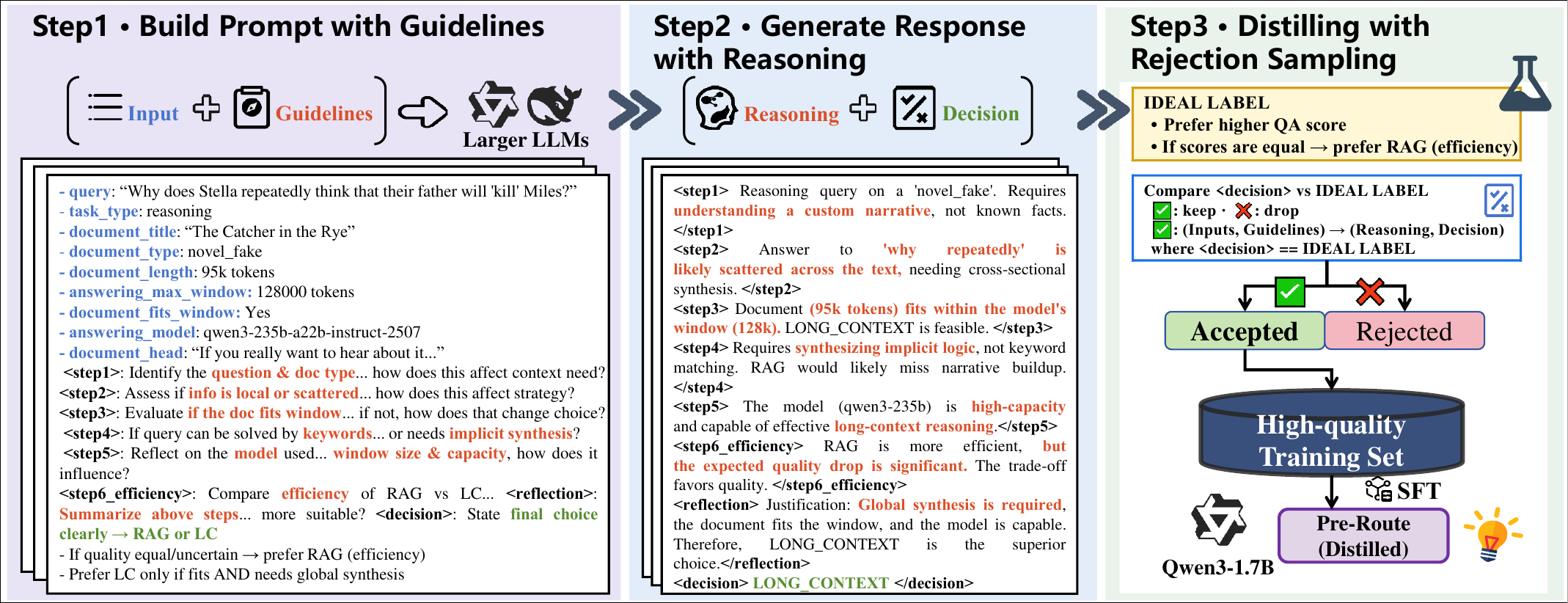}
    \vspace{-12pt}
    \caption{Overview of the Pre-Route framework. Step 1 builds structured prompts with meta-infos and guidelines; Step 2 generates reasoning-based routing decisions; Step 3 performs rejection sampling to filter out suboptimal decisions, distilling high-quality training data for efficient and interpretable routing with smaller models (Qwen3-1.7B). (Complete prompt is provided in Fig.~\ref{fig:complete_decision_prompt})}
    \label{fig:main}
\vspace{-14pt}
\end{figure*}

\vspace{-8pt}

\section{Methodology}
\vspace{-8pt}

\subsection{Problem Formalization and Theoretical Framework}

\paragraph{Decision-theoretic formalization.}
Given a query \(q\) and long document \(\mathcal D\), a routing policy \(\pi\) selects \(y\in\{RAG,LC\}\) solely from low-cost meta-information \(m\) (title, type, length estimate, leading snippet, etc.), leveraging the observation that \(m\) often encodes sufficient cues about information distribution and task difficulty. Formally, we let \(U(y;q,\mathcal D)\) denote the expected QA performance of decision \(y\), and \(C(y;q,\mathcal D)\) denote the corresponding execution cost.

\vspace{-6pt}
\subsection{The Pre-Route Framework}
\vspace{-2pt}

\paragraph{Core hypothesis and idea.}
LLMs already encode task understanding and information-need sensing, yet standard pipelines do not explicitly \emph{activate} this capacity. Pre-Route triggers the latent planning ability by performing \textbf{structured reasoning with decision labels} \emph{before} answering: (i) a \textbf{pre-decision} is made prior to costly full-document processing; (ii) the decision relies on \textbf{meta-information only}, avoiding full-document retrieval overhead; (iii) a \textbf{structured rationale} provides interpretability. In practice, Pre-Route uses only low-cost \(m\): user query, task type, document title/type, document length, answering model, a short document head, and RAG configuration; no retriever or answering model is called, markedly reducing planning overhead \(C_{\text{route}}\) (see cost analysis below).

\vspace{-10pt}
\paragraph{Structured reasoning-chain design.}
Routing is conceptually grounded in four broad \emph{decision dimensions}—(i) task \& document characterization, i.e., how task or document types affect the need for retrieval vs.\ long-context; (ii) information distribution, i.e., whether relevant content is missing, fragmented, or dispersed across sections; (iii) capability–constraint matching, i.e., whether retrieval suffices or long-context is required to avoid positional or window-size limits; and (iv) decision with explanation, i.e., outputting the route together with rationale and fallback considerations.
In practice, these dimensions are instantiated through six \emph{steps} (Fig.~\ref{fig:main}): \textbf{(1) task \& document characterization}; \textbf{(2) distribution pattern judgment}; \textbf{(3) context-window feasibility}; \textbf{(4) retrieval feasibility}; \textbf{(5) model capability consideration}; and \textbf{(6) efficiency trade-off}. This layered design aligns with prior empirical findings \citep{DBLP:conf/iclr/0008PWM0LSBSC24_raglc, DBLP:conf/emnlp/Li00MB24selfroute, DBLP:journals/corr/abs-2502-09977_lara} through a structured pipeline, highlighting operational and transparent decision process while yielding interpretability, efficiency, and robust generalization.

\vspace{-8pt}
\subsection{Cost Analysis}
\label{sec:cost}

We decompose cost into a routing part and an answering part:
\vspace{-8pt}
\begin{equation}
C(y;q,\mathcal D)=C_{\text{route}}(m)+C_{\text{answer}}(y;q,\mathcal D),
\label{eq:cost-decompose}
\end{equation}
where \(C_{\text{route}}(m)\) is the cost of making route decision, and \(C_{\text{answer}}(y;q,\mathcal D)\) corresponds to downstream execution (retrieval for RAG vs.\ long-context processing for LC). Using this decomposition, the expected total cost of a routing policy with LC-selection probability \(p(\text{LC})=\Pr(y=LC)\) is
\vspace{-8pt}
\begin{equation}
C=C_{\text{route}} + p(\text{LC})\cdot C_{\text{LC}} + \big(1-p(\text{LC})\big)\cdot C_{\text{RAG}},
\label{eq:total-cost}
\end{equation}
where \(C_{\text{LC}}\) and \(C_{\text{RAG}}\) denote the average answering costs under LC and RAG respectively. In what follows we analyze \(C_{\text{route}}\) and \(C_{\text{answer}}\) separately and then combine the two perspectives.

\subsubsection{Analysis of \(C_{\text{route}}\): Self-Route vs.\ Pre-Route}
\begin{table}[t]
\centering
\caption{Per-decision routing cost in USD (average query\revised{, computed over all queries in the LaRA benchmark}).}
\resizebox{\linewidth}{!}{ 
\begin{tabular}{lcccccc}
\toprule
Method & Model & Input Toks & Output Toks & \multicolumn{3}{c}{Cost ($\times 10^{-3}$ USD)} \\
\cmidrule(lr){5-7}
 &  &  &  & Input & Output & \textbf{Total} \\
\midrule
Self-Route & Qwen3-235B & 2600 & 27  & 0.73 & 0.03 & 0.76 \\
Pre-Route  & Qwen3-235B & 1205 & 648 & 0.34 & 0.73 & 1.07 \\
Pre-Route  & Qwen3-1.7B & 1205 & \revised{670} & 0.05 & 0.11 & \textbf{0.16} \\
\bottomrule

\end{tabular}}
\label{tab:route-cost}
\vspace{-16pt}
\end{table}
Pre-Route achieves lower routing cost than the Self-Route baseline once distillation is applied: with Qwen3-1.7B the cost is reduced to about one-fifth that of Self-Route (Tab.~\ref{tab:route-cost}). With Qwen3-235B, the cost is slightly higher due to the generated rationale, but this overhead is offset once a smaller router is used. This demonstrates that planning overhead can be kept low while retaining interpretability.

\subsubsection{Analysis of \(C_{\text{answer}}\): LC vs.\ RAG}
Answering cost is the dominant component, and it is largely governed by the LC selection probability \(p(\text{LC})\). A long-context (LC) pass processes the entire document, whereas retrieval-augmented generation (RAG) operates only on retrieved passages. Since \(C_{\text{LC}}\) is substantially larger than \(C_{\text{RAG}}\), even moderate changes in \(p(\text{LC})\) have a significant impact on total cost. Hence, controlling \(p(\text{LC})\) is central to cost management.

\subsubsection{Overall cost comparison}
When both components are combined, routing overhead \(C_{\text{route}}\) remains small relative to answering cost. Even with a large router (235B), planning cost is below 4\% of a single 100k-token LC pass (\$0.028); with a small router (1.7B), the proportion falls below 1\%. These comparisons indicate that the total cost is primarily determined by the answering term, and in particular by the LC selection probability \(p(\text{LC})\).
Critically, Pre-Route provides \emph{structural} savings over post-retrieval routing (e.g., Self-Route): since the routing decision is made \emph{before} document processing, the retrieval overhead (embeddings, vector DB queries, reranking) is incurred only when RAG is selected, rather than unconditionally for every query.

\subsection{Learning Paradigm: Selective Strategy Alignment via Distillation}

\label{sec:learn}
\paragraph{Motivation and cost optimization.}
The overall cost is primarily determined by the answering component, and in particular by the LC selection probability \(p(\text{LC})\). Yet Routing overhead \(C_{\text{route}}\) still matters for latency and scalability in practical deployment. Large models (e.g., Qwen3-235B) can produce strong routing in zero-shot settings, yet the planning overhead \(C_{\text{route}}\) is non-trivial. We distill the planning ability into smaller models (e.g., Qwen-7B/1.7B) to reduce cost and latency, improve deployability in constrained environments, and specialize the student for routing.

\paragraph{Learning targets.}
For fixed meta-information \(m\), the objective is to maximize expected QA performance by choosing the option with higher performance. When utilities are comparable or the difference is marginal (e.g., tied scores in multi-level or binary evaluation), the lower-cost RAG is preferred, since \(C_{\text{RAG}} \ll C_{\text{LC}}\). This principle leads directly to the definition of the \textbf{ideal label} for each sample:
\vspace{-8pt}
\begin{equation}
\hat{y}_{\text{ideal}}=
\begin{cases}
LC, & U(LC;q,\mathcal D) > U(RAG;q,\mathcal D),\\
RAG, & \text{otherwise}.
\end{cases}
\label{eq:ideal-label}
\end{equation}

Intuitively, LC is chosen only when its performance advantage over RAG; when utilities are comparable, the lower-cost RAG is preferred. Improving routing accuracy reduces expected performance loss by avoiding wasteful LC on unhelpful samples and reallocating budget to cases that truly benefit from LC. We formally prove in Appendix~\ref{app:proof-cost-minimal} that optimizing the ideal label ensures minimal expected cost among all strategies consistent with performance-optimal decisions.

\paragraph{Stage 1: Rejection sampling.}
Let the teacher \(\pi_T\) produce candidates \(S_i=(T_i,y_i)\); keep entries where the teacher decision matches the ideal rule,
\vspace{-8pt}
\begin{equation}
\mathcal D_{\text{filtered}}=\{(m_i,T_i,y_i)\mid y_i=\hat{y}_{\text{ideal},i}\},
\label{eq:filtered}
\end{equation}
\revised{where \(m_i\), \(T_i\), \(y_i\), and \(\hat{y}_{\text{ideal},i}\) denote the metadata, reasoning trace, decision, and ideal decision for example \(i\)}, thus ensuring data quality, correcting the empirical distribution toward a near-optimal support, removing teacher errors, and enforcing consistency with task-optimal supervision.

\paragraph{Stage 2: Path SFT (student fitting on successful support).}
Train the student on \(\mathcal D_{\text{filtered}}\) by minimizing
\vspace{-8pt}
\begin{equation}
\ \mathcal L_{\text{SFT}}(\theta_S)=-\mathbb E_{(m,T,y)\sim \mathcal D_{\text{filtered}}}\big[\log \pi_S(T,y\mid m)\big]\ ,
\label{eq:sft-loss}
\end{equation}
which is equivalent to minimizing \(\mathrm{KL}(q^*\Vert \pi_S)\) where \(q^*\) is the teacher's \emph{successful} sub-distribution. Unlike answer distillation, this transfers \emph{how to reason} rather than merely \emph{what to answer}.
\vspace{-8pt}
\paragraph{Data construction strategy.}
We generate reasoning chains with Qwen3-235B-A22B and DeepSeek-R1; assign \(\hat{y}_{\text{ideal}}\) consistently; enhance scenario diversity by mixing answering-model scales; perform a global 70/10/20 train/val/test split stratified by (type, level, context length); then merge and retain only training entries where the teacher is consistent with the ideal label.\label{sec:data_construct}

\vspace{-10pt}

\section{Experiments}
\vspace{-8pt}

\begin{table*}[t]
\vspace{-10pt}
\centering
\large

\caption{LaRA (In-distribution main experiment): Pre-Route vs. Self-Route and fixed baselines; Best and Second per answer model. \emph{Pre-Route Variants:} Large: R1 (DeepSeek-R1) or Q235B (Qwen3-235B); Small: Q1.7B (Qwen3-1.7B); Distilled: D-Q1.7B (Distilled-Qwen3-1.7B).}
\vspace{-12pt}
\begin{tcolorbox}[colback=red!4, frame hidden, sharp corners, boxrule=0pt, left=1pt, right=1pt, top=1pt, bottom=1pt, before skip=0pt, after skip=0pt]
\scriptsize Red-shaded entries denote our recommended \textbf{Pre-Route(Large)} and \textbf{Pre-Route(Distilled)}.
\end{tcolorbox}
\label{tab:wide-In-distribution-larawide}
\resizebox{0.92\textwidth}{!}{%
\begin{tabular}{l|rrr|rrr|rrr|rrr|rrr|rrr}
\toprule
\multirow{2}{*}{\begin{tabular}[c]{@{}l@{}}\textbf{Answer Model}\\\textbf{Router Model}\end{tabular}} & \multicolumn{3}{c|}{Qwen3-1.7B [N]} & \multicolumn{3}{c|}{Qwen3-1.7B [T]} & \multicolumn{3}{c|}{Qwen3-4B [N]} & \multicolumn{3}{c|}{Qwen3-4B [T]} & \multicolumn{3}{c|}{Qwen3-8B [N]} & \multicolumn{3}{c}{Qwen3-8B [T]} \\
 & \textbf{QA$\uparrow$} & \textbf{LC(\%)$\downarrow$} & \textbf{Acc$\uparrow$} & \textbf{QA$\uparrow$} & \textbf{LC(\%)$\downarrow$} & \textbf{Acc$\uparrow$} & \textbf{QA$\uparrow$} & \textbf{LC(\%)$\downarrow$} & \textbf{Acc$\uparrow$} & \textbf{QA$\uparrow$} & \textbf{LC(\%)$\downarrow$} & \textbf{Acc$\uparrow$} & \textbf{QA$\uparrow$} & \textbf{LC(\%)$\downarrow$} & \textbf{Acc$\uparrow$} & \textbf{QA$\uparrow$} & \textbf{LC(\%)$\downarrow$} & \textbf{Acc$\uparrow$} \\
\cmidrule(l{-1pt}r{-1pt}){1-19}
\cmidrule(l{-1pt}r{-1pt}){1-19}
Always-LC (Baseline)  & 2.13 & 100.0 & 0.18 & 2.29 & 100.0 & 0.18 & 2.89 & 100.0 & 0.31 & 3.14 & 100.0 & 0.36 & 2.98 & 100.0 & 0.31 & 3.23 & 100.0 & 0.35 \\
Always-RAG (Baseline)  & 2.70 & 0.0 & 0.82 & 2.89 & 0.0 & 0.82 & 3.08 & 0.0 & 0.69 & 3.17 & 0.0 & 0.64 & 3.13 & 0.0 & 0.69 & 3.22 & 0.0 & 0.65 \\
Self-Route (Baseline) & 2.22 & 33.6 & 0.49 & 2.40 & 31.7 & 0.48 & 2.81 & 24.6 & 0.53 & 3.06 & 30.5 & 0.47 & 3.04 & 28.1 & 0.55 & 3.23 & 30.7 & 0.57 \\
\cmidrule(l{-1pt}r{-1pt}){1-19}
\rowcolor{red!4}
Pre-Route (R1) [T] & \underline{2.70} & \underline{2.7} & \underline{0.83} & 2.88 & \textbf{1.8} & \underline{0.83} & 3.09 & \textbf{5.1} & \textbf{0.73} & 3.20 & \textbf{4.8} & \underline{0.70} & 3.15 & \textbf{5.0} & \textbf{0.76} & 3.23 & \textbf{8.0} & \underline{0.70} \\
\rowcolor{red!4}
Pre-Route (Q235B) [N] & \textbf{2.71} & 4.7 & \underline{0.83} & \underline{2.90} & 4.8 & 0.82 & \textbf{3.13} & 21.1 & \textbf{0.73} & \textbf{3.26} & 26.6 & \underline{0.70} & 3.16 & 22.2 & 0.73 & \textbf{3.30} & 26.2 & \underline{0.70} \\
\rowcolor{red!4}
Pre-Route (Q235B) [T] & \textbf{2.71} & \textbf{2.3} & \textbf{0.84} & \textbf{2.91} & 3.0 & \textbf{0.84} & \underline{3.11} & \underline{13.1} & \textbf{0.73} & \underline{3.22} & \underline{15.7} & \underline{0.70} & \underline{3.17} & \underline{15.9} & \underline{0.74} & 3.27 & \underline{17.9} & \textbf{0.71} \\
Pre-Route (Q1.7B) [N] & \underline{2.70} & 7.9 & 0.79 & 2.85 & 10.8 & 0.77 & 3.04 & 30.1 & 0.61 & 3.20 & 35.9 & 0.57 & 3.12 & 34.0 & 0.60 & \underline{3.29} & 33.2 & 0.63 \\
Pre-Route (Q1.7B) [T] & 2.68 & 10.8 & 0.77 & 2.87 & 11.1 & 0.80 & 3.07 & 30.8 & 0.62 & \underline{3.22} & 40.1 & 0.58 & 3.10 & 35.6 & 0.61 & \underline{3.29} & 39.9 & 0.58 \\
\rowcolor{red!4}
Pre-Route (D-Q1.7B) [N] & \underline{2.70} & 3.6 & 0.82 & 2.89 & 3.9 & \underline{0.83} & 3.10 & 17.9 & 0.71 & \textbf{3.26} & 21.2 & \textbf{0.71} & 3.16 & 21.5 & 0.73 & \textbf{3.30} & 21.4 & \textbf{0.71} \\
\rowcolor{red!4}
Pre-Route (D-Q1.7B) [T] & \textbf{2.71} & 3.2 & \underline{0.83} & 2.89 & \underline{2.8} & 0.82 & 3.10 & 21.1 & \underline{0.72} & \textbf{3.26} & 20.5 & \underline{0.70} & \textbf{3.19} & 19.5 & \textbf{0.76} & \textbf{3.28} & 20.8 & \underline{0.70} \\
\cmidrule(l{-1pt}r{-1pt}){1-19}
\cmidrule(l{-1pt}r{-1pt}){1-19}
\multirow{2}{*}{\begin{tabular}[c]{@{}l@{}}\textbf{Answer Model}\\\textbf{Router Model}\end{tabular}} & \multicolumn{3}{c|}{Qwen3-30B [N]} & \multicolumn{3}{c|}{Qwen3-30B [T]} & \multicolumn{3}{c|}{Qwen3-235B [N]} & \multicolumn{3}{c|}{Qwen3-235B [T]} & \multicolumn{3}{c|}{DeepSeek-R1 [T]} & \multicolumn{3}{c}{Qwen-Max [N]} \\
 & \textbf{QA$\uparrow$} & \textbf{LC(\%)$\downarrow$} & \textbf{Acc$\uparrow$} & \textbf{QA$\uparrow$} & \textbf{LC(\%)$\downarrow$} & \textbf{Acc$\uparrow$} & \textbf{QA$\uparrow$} & \textbf{LC(\%)$\downarrow$} & \textbf{Acc$\uparrow$} & \textbf{QA$\uparrow$} & \textbf{LC(\%)$\downarrow$} & \textbf{Acc$\uparrow$} & \textbf{QA$\uparrow$} & \textbf{LC(\%)$\downarrow$} & \textbf{Acc$\uparrow$} & \textbf{QA$\uparrow$} & \textbf{LC(\%)$\downarrow$} & \textbf{Acc$\uparrow$} \\
\cmidrule(l{-1pt}r{-1pt}){1-19}
\cmidrule(l{-1pt}r{-1pt}){1-19}
Always-LC (Baseline)  & 3.37 & 100.0 & 0.40 & 3.39 & 100.0 & 0.39 & 3.46 & 100.0 & 0.34 & 3.51 & 100.0 & 0.40 & 3.44 & 100.0 & 0.35 & 3.36 & 100.0 & 0.39 \\
Always-RAG (Baseline)  & 3.18 & 0.0 & 0.60 & 3.27 & 0.0 & 0.61 & 3.32 & 0.0 & 0.66 & 3.33 & 0.0 & 0.60 & 3.38 & 0.0 & 0.65 & 3.20 & 0.0 & 0.61 \\
Self-Route (Baseline) & 3.20 & 33.9 & 0.53 & 3.10 & 35.5 & 0.42 & 3.39 & 41.1 & 0.56 & 3.34 & 33.9 & 0.52 & 3.36 & 31.4 & 0.52 & 3.28 & 36.5 & 0.56 \\
\cmidrule(l{-1pt}r{-1pt}){1-19}
\rowcolor{red!4}
Pre-Route (R1) [T] & 3.26 & \textbf{14.9} & \textbf{0.69} & 3.30 & \textbf{9.6} & \underline{0.69} & 3.39 & \textbf{14.2} & \underline{0.73} & 3.37 & \textbf{10.8} & \underline{0.68} & 3.42 & \textbf{10.0} & 0.70 & 3.24 & \textbf{10.6} & \underline{0.67} \\
\rowcolor{red!4}
Pre-Route (Q235B) [N] & \textbf{3.31} & 33.8 & 0.65 & \textbf{3.39} & 29.4 & 0.68 & \textbf{3.47} & 29.1 & 0.72 & \underline{3.43} & 27.2 & 0.67 & 3.47 & 27.2 & 0.70 & \textbf{3.31} & 23.1 & \textbf{0.69} \\
\rowcolor{red!4}
Pre-Route (Q235B) [T] & 3.27 & \underline{17.9} & \underline{0.68} & 3.32 & \underline{18.6} & \underline{0.69} & 3.42 & \underline{20.2} & 0.72 & 3.40 & \underline{18.3} & \underline{0.68} & 3.45 & \underline{16.9} & 0.71 & \underline{3.29} & \underline{19.6} & \textbf{0.69} \\
Pre-Route (Q1.7B) [N] & 3.26 & 42.3 & 0.53 & \underline{3.37} & 37.2 & 0.57 & 3.36 & 40.9 & 0.53 & 3.41 & 40.6 & 0.54 & 3.44 & 32.5 & 0.56 & 3.25 & 36.6 & 0.55 \\
Pre-Route (Q1.7B) [T] & 3.27 & 40.0 & 0.61 & 3.36 & 38.5 & 0.59 & 3.41 & 35.6 & 0.59 & 3.41 & 33.1 & 0.59 & \underline{3.49} & 31.5 & 0.68 & 3.26 & 28.3 & 0.60 \\
\rowcolor{red!4}
Pre-Route (D-Q1.7B) [N] & 3.28 & 27.7 & 0.66 & \textbf{3.39} & 28.0 & \textbf{0.70} & \underline{3.46} & 24.6 & \textbf{0.74} & \underline{3.43} & 22.7 & \textbf{0.69} & \textbf{3.51} & 20.6 & \textbf{0.73} & 3.28 & 24.7 & \underline{0.67} \\
\rowcolor{red!4}
Pre-Route (D-Q1.7B) [T] & \underline{3.29} & 26.4 & 0.65 & 3.35 & 24.3 & 0.67 & \textbf{3.47} & 26.9 & \underline{0.73} & \textbf{3.44} & 26.0 & 0.67 & 3.47 & 20.3 & \underline{0.72} & \textbf{3.31} & 24.5 & \textbf{0.69} \\
\cmidrule(l{-1pt}r{-1pt}){1-19}
\end{tabular}}

\vspace{-12pt}
\end{table*}

\subsection{Experimental Setup}
\label{sec:exp_setup}

\paragraph{Datasets and Benchmarks}
We evaluate Pre-Route on both in-domain and out-of-domain settings. The in-domain dataset is \textbf{LaRA}, used for training and primary evaluation. To assess generalization, we further evaluate on the out-of-domain benchmark \textbf{LongBench-v2}.  For \textbf{LaRA}, since knowledge distillation is involved during data construction (see Sec.~\ref{sec:data_construct}), we adopt stratified random splits by \emph{document type}, \emph{task level}, and \emph{context length} for all experiments to ensure fair comparison and prevent leakage. For \textbf{LongBench-v2}, we report results on the full evaluation set. Detailed statistics and examples are provided in Appendix~\ref{app:datasets}. For retrieval, we follow the default LaRA configuration, using a chunk size of 600 with 100-token overlap, the \texttt{gte-multilingual-base} embedding model and \texttt{gte-multilingual-reranker-base} reranker~\citep{zhang2024mgte}, with vector--rerank split ratio 0.5 and rerank size 5.

\paragraph{Evaluation Metrics}
We report three metrics: (i) \textbf{Route Accuracy} — the fraction of routing decisions that match the \emph{ideal label} (formalized in Sec.~\ref{sec:learn}). The ideal label selects the decision \(y\in\{RAG,LC\}\) that maximizes QA performance \(U\); in ties, it defaults to \(RAG\) to prefer lower cost. Thus, higher Route Accuracy reflects better effectiveness--efficiency alignment. (ii) \textbf{QA Score} — downstream answer quality. Because the original LaRA binary metric is too coarse to distinguish superficial vs.\ substantively complete answers, we adopt a 4-point rubric for finer resolution (examples in Appendix~\ref{app:case_study_metric}). (iii) \textbf{LC Selection Rate} — the proportion of queries routed to the costly \(LC\) path, used as a proxy for computational cost.

\paragraph{Baselines}
We compare against: (a) Always-RAG and Always-LC (fixed strategies); (b) Self-Route~\citep{DBLP:conf/emnlp/Li00MB24selfroute} using the original prompt (Appendix~\ref{app:prompts}); and (c) Pre-Route in three variants that we report separately throughout: Pre-Route (Large: DeepSeek-R1, Qwen3-235B, prompt-only), Pre-Route (Small: Qwen3-1.7B, prompt-only),
and Pre-Route (Distilled: Distilled-Qwen3-1.7B). All Pre-Route prompts are fixed across models. For all applicable methods we report both thinking \textbf{[T]} and no-thinking \textbf{[N]} modes. Always-RAG is excluded from LC Rate ranking (LC=0\%), while Always-LC is excluded from QA ranking due to prohibitive computational costs.

\paragraph{Models}
Our experiments employ the latest available models from the recent Qwen3 series, spanning the 235B, 30B, 8B, 4B, and 1.7B scales. Qwen3-235B and Qwen3-30B are from the July 2025 version. Our fine-tuned student model is referred to as Distilled-Qwen3-1.7B. We also include DeepSeek-R1 (May 2025 version) and the proprietary model Qwen-Max (January 2025 version).


\subsection{Main Results on LaRA Benchmark}
Tab.~\ref{tab:wide-In-distribution-larawide} summarizes the LaRA in-domain results. Note that \textbf{top-two results (bold/underline) consistently fall in our red shaded Pre-Route methods.} Overall, Pre-Route consistently surpasses Self-Route in QA score and route accuracy across all settings, while also reducing the reliance on LC, proving its robustness across various answer model sizes and thinking modes. All reported improvements are statistically significant (paired t-test, $p < 0.01$, Cohen's $d$: 0.19--0.26). We also provide more case studies on why sometimes Self-Route fails on LaRA in Appendix~\ref{app:selfroutefail}.

\textbf{Large model prompt-only routing.} Using Qwen3-235B and DeepSeek-R1, we find that Pre-Route (Prompt-only) significantly outperforms Self-Route across answer models. Structured reasoning prompts alone can reliably elicit latent routing ability, yielding both higher route accuracy and QA score. This suggests that LLMs implicitly learn the comparative strengths of RAG and LC during pretraining---e.g., when information is concentrated vs.\ scattered---and structured prompts elicit this knowledge proactively rather than relying on post-hoc failure signals.

\textbf{Transferability to small models.} When directly applied to Qwen3-1.7B with identical prompts, performance degrades: reasoning chains and routing decisions become unstable. For example, under the Qwen3-235B [T] answer model, Qwen3-1.7B routing lags far behind the large-model routers in accuracy despite comparable QA scores, highlighting that small routers cannot directly inherit complex planning behavior. Notably, small models' errors are systematically tilted toward the ``safer'' LC option (74.3\% of errors), suggesting a \emph{correct but unstable} prior that distillation stabilizes rather than creates.

\textbf{Distillation on in-domain data.} Using Pre-Route (Large) as teacher, we distill into a lightweight Qwen3-1.7B student with reasoning-chain supervision on LaRA. The distilled model successfully acquires structured routing logic, approaching the teacher's QA performance while achieving a lower LC selection rate. For instance, with the strongest answer model Qwen-Max [N], the distilled router reaches clearly higher accuracy at substantially reduced LC usage compared to Self-Route. These results verify that large-model routing logic can be effectively compressed and transferred. Notably, the non-thinking distilled router already performs well, making it the most resource-efficient configuration. We additionally compare against traditional ML classifiers in Appendix~\ref{app:ml}, confirming Pre-Route's gains extend beyond metadata signals. A fine-grained breakdown by document type and difficulty (Appendix~\ref{app:breakdown}) shows consistent gains across all categories, with the largest improvement on hallucination tasks (Acc 0.22$\to$0.75).

\begin{table*}[t]
  \vspace{-10pt}
  \centering
  \large
  \caption{LongBench-v2 (OOD, rerank size = 7): Pre-Route results;
  \textbf{Best} and \underline{Second} per answer model. \emph{Pre-Route Variants:} Large: R1 (DeepSeek-R1) or Q235B (Qwen3-235B); Small: Q1.7B (Qwen3-1.7B); Distilled: D-Q1.7B (Distilled-Qwen3-1.7B).}
  \vspace{-12pt}
  \begin{tcolorbox}[colback=red!4, frame hidden, sharp corners, boxrule=0pt, left=1pt, right=1pt, top=1pt, bottom=1pt, before skip=0pt, after skip=0pt]
  \scriptsize Red-shaded entries denote our recommended \textbf{Pre-Route(Large)} and \textbf{Pre-Route(Distilled)}.
  \end{tcolorbox}
  \label{tab:wide-ood-lb7wide}
  \resizebox{0.92\textwidth}{!}{%
  \begin{tabular}{l|rrr|rrr|rrr|rrr|rrr|rrr}
  \toprule
  \multirow{2}{*}{\begin{tabular}[c]{@{}l@{}}\textbf{Answer Model}\\\textbf{Router Model}\end{tabular}} & \multicolumn{3}{c|}{Qwen3-1.7B [N]} & \multicolumn{3}{c|}{Qwen3-1.7B [T]} & \multicolumn{3}{c|}{Qwen3-4B [N]} & \multicolumn{3}{c|}{Qwen3-4B [T]} & \multicolumn{3}{c|}{Qwen3-8B [N]} & \multicolumn{3}{c}{Qwen3-8B [T]} \\
   & \textbf{QA$\uparrow$} & \textbf{LC(\%)$\downarrow$} & \textbf{Acc$\uparrow$} & \textbf{QA$\uparrow$} & \textbf{LC(\%)$\downarrow$} & \textbf{Acc$\uparrow$} & \textbf{QA$\uparrow$} & \textbf{LC(\%)$\downarrow$} & \textbf{Acc$\uparrow$} & \textbf{QA$\uparrow$} & \textbf{LC(\%)$\downarrow$} & \textbf{Acc$\uparrow$} & \textbf{QA$\uparrow$} & \textbf{LC(\%)$\downarrow$} & \textbf{Acc$\uparrow$} & \textbf{QA$\uparrow$} & \textbf{LC(\%)$\downarrow$} & \textbf{Acc$\uparrow$} \\
  \cmidrule(l{-1pt}r{-1pt}){1-19}
  \cmidrule(l{-1pt}r{-1pt}){1-19}
  Always-LC (Baseline)  & 0.29 & 100.0 & 0.33 & 0.28 & 100.0 & 0.31 & 0.30 & 100.0 & 0.28 & 0.36 & 100.0 & 0.34 & 0.40 & 100.0 & 0.33 & 0.40 & 100.0 & 0.34 \\
  Always-RAG (Baseline)  & 0.34 & 0.0 & 0.67 & 0.34 & 0.0 & 0.69 & 0.33 & 0.0 & 0.72 & 0.38 & 0.0 & 0.66 & 0.36 & 0.0 & 0.67 & 0.39 & 0.0 & 0.66 \\
  Self-Route (Baseline) & \textbf{0.34} & 31.1 & 0.68 & \underline{0.33} & 26.0 & 0.69 & 0.32 & \underline{22.0} & \textbf{0.72} & 0.37 & 30.6 & 0.64 & 0.37 & 24.8 & 0.68 & 0.39 & 28.6 & 0.65 \\
  \cmidrule(l{-1pt}r{-1pt}){1-19}
  \rowcolor{red!4}
  Pre-Route (R1) [T] & 0.32 & 24.8 & 0.71 & \underline{0.33} & \underline{22.5} & \underline{0.72} & \textbf{0.34} & 24.4 & \underline{0.69} & \textbf{0.41} & 25.9 & \underline{0.69} & \underline{0.38} & \underline{19.5} & \underline{0.71} & \textbf{0.42} & 22.7 & \textbf{0.71} \\
  \rowcolor{red!4}
  Pre-Route (Q235B) [N] & 0.32 & 38.1 & 0.61 & \textbf{0.34} & 39.1 & 0.61 & \underline{0.33} & 39.4 & 0.57 & \underline{0.39} & 39.6 & 0.59 & \underline{0.38} & 32.2 & 0.63 & \underline{0.41} & 31.3 & 0.64 \\
  \rowcolor{red!4}
  Pre-Route (Q235B) [T] & 0.32 & 29.7 & 0.68 & \underline{0.33} & 29.4 & 0.69 & \textbf{0.34} & 27.6 & 0.67 & \underline{0.39} & 31.0 & 0.65 & \underline{0.38} & 26.2 & 0.66 & \underline{0.41} & 24.0 & \underline{0.70} \\
  Pre-Route (Q1.7B) [N] & \textbf{0.34} & 37.7 & 0.62 & 0.30 & 36.4 & 0.60 & 0.32 & 39.4 & 0.58 & 0.36 & 36.8 & 0.57 & \underline{0.38} & 40.1 & 0.57 & 0.39 & 41.3 & 0.57 \\
  Pre-Route (Q1.7B) [T] & \underline{0.33} & 42.2 & 0.56 & \underline{0.33} & 43.9 & 0.56 & \underline{0.33} & 40.5 & 0.57 & 0.37 & 40.0 & 0.57 & \textbf{0.39} & 45.9 & 0.55 & \underline{0.41} & 43.1 & 0.55 \\
  \rowcolor{red!4}
  Pre-Route (D-Q1.7B) [N] & \textbf{0.34} & \textbf{6.8} & \textbf{0.84} & \underline{0.33} & \textbf{7.7} & \textbf{0.81} & \textbf{0.34} & 23.9 & \underline{0.69} & 0.37 & \underline{22.9} & 0.68 & 0.37 & \textbf{19.1} & \textbf{0.72} & 0.40 & \textbf{20.4} & \underline{0.70} \\
  \rowcolor{red!4}
  Pre-Route (D-Q1.7B) [T] & \underline{0.33} & \underline{7.0} & \underline{0.83} & \underline{0.33} & \textbf{7.7} & \textbf{0.81} & \underline{0.33} & \textbf{20.3} & \underline{0.69} & 0.38 & \textbf{19.7} & \textbf{0.71} & 0.36 & 20.4 & 0.69 & 0.40 & \underline{21.5} & 0.69 \\
  \cmidrule(l{-1pt}r{-1pt}){1-19}
  \cmidrule(l{-1pt}r{-1pt}){1-19}
  \multirow{2}{*}{\begin{tabular}[c]{@{}l@{}}\textbf{Answer Model}\\\textbf{Router Model}\end{tabular}} & \multicolumn{3}{c|}{Qwen3-30B [N]} & \multicolumn{3}{c|}{Qwen3-30B [T]} & \multicolumn{3}{c|}{Qwen3-235B [N]} & \multicolumn{3}{c|}{Qwen3-235B [T]} & \multicolumn{3}{c|}{DeepSeek-R1 [T]} & \multicolumn{3}{c}{Qwen-Max [N]} \\
   & \textbf{QA$\uparrow$} & \textbf{LC(\%)$\downarrow$} & \textbf{Acc$\uparrow$} & \textbf{QA$\uparrow$} & \textbf{LC(\%)$\downarrow$} & \textbf{Acc$\uparrow$} & \textbf{QA$\uparrow$} & \textbf{LC(\%)$\downarrow$} & \textbf{Acc$\uparrow$} & \textbf{QA$\uparrow$} & \textbf{LC(\%)$\downarrow$} & \textbf{Acc$\uparrow$} & \textbf{QA$\uparrow$} & \textbf{LC(\%)$\downarrow$} & \textbf{Acc$\uparrow$} & \textbf{QA$\uparrow$} & \textbf{LC(\%)$\downarrow$} & \textbf{Acc$\uparrow$} \\
  \cmidrule(l{-1pt}r{-1pt}){1-19}
  \cmidrule(l{-1pt}r{-1pt}){1-19}
  Always-LC (Baseline)  & 0.40 & 100.0 & 0.34 & 0.36 & 100.0 & 0.51 & 0.47 & 100.0 & 0.54 & 0.52 & 100.0 & 0.52 & 0.53 & 100.0 & 0.43 & 0.48 & 100.0 & 0.52 \\
  Always-RAG (Baseline)  & 0.42 & 0.0 & 0.66 & 0.24 & 0.0 & 0.49 & 0.46 & 0.0 & 0.46 & 0.45 & 0.0 & 0.48 & 0.49 & 0.0 & 0.57 & 0.44 & 0.0 & 0.48 \\
  Self-Route (Baseline) & \underline{0.42} & 29.5 & \underline{0.66} & 0.28 & 40.6 & 0.54 & 0.46 & 57.2 & 0.46 & \textbf{0.50} & 46.6 & 0.55 & \underline{0.54} & 35.4 & 0.62 & \underline{0.46} & 49.7 & 0.51 \\
  \cmidrule(l{-1pt}r{-1pt}){1-19}
  \rowcolor{red!4}
  Pre-Route (R1) [T] & 0.40 & \textbf{25.2} & \textbf{0.67} & 0.25 & \textbf{25.5} & \underline{0.62} & \textbf{0.49} & \textbf{25.3} & \textbf{0.72} & \underline{0.48} & \textbf{24.0} & \textbf{0.65} & 0.53 & 26.2 & \textbf{0.70} & \underline{0.46} & 25.8 & \textbf{0.69} \\
  \rowcolor{red!4}
  Pre-Route (Q235B) [N] & 0.40 & 36.8 & 0.58 & 0.28 & 35.1 & 0.60 & \textbf{0.49} & 39.0 & 0.62 & \textbf{0.50} & 38.1 & 0.59 & \textbf{0.55} & 40.3 & 0.62 & \textbf{0.47} & 39.1 & 0.61 \\
  \rowcolor{red!4}
  Pre-Route (Q235B) [T] & 0.41 & 29.2 & \underline{0.66} & 0.26 & 28.8 & 0.60 & \textbf{0.49} & 29.2 & \underline{0.68} & \underline{0.48} & 28.3 & \underline{0.63} & \underline{0.54} & 36.4 & 0.62 & \textbf{0.47} & 29.8 & 0.67 \\
  Pre-Route (Q1.7B) [N] & 0.40 & 48.1 & 0.51 & \underline{0.29} & 42.2 & 0.53 & \underline{0.47} & 36.0 & 0.61 & 0.47 & 35.7 & 0.57 & 0.53 & 41.3 & 0.56 & \underline{0.46} & 42.1 & 0.56 \\
  Pre-Route (Q1.7B) [T] & 0.40 & 44.8 & 0.52 & \textbf{0.30} & 43.9 & 0.56 & 0.46 & 34.4 & 0.60 & \textbf{0.50} & 40.5 & 0.57 & 0.53 & 40.8 & 0.59 & 0.45 & 42.3 & 0.54 \\
  \rowcolor{red!4}
  Pre-Route (D-Q1.7B) [N] & \textbf{0.43} & \underline{26.9} & \textbf{0.67} & 0.27 & 28.5 & \underline{0.62} & \underline{0.47} & \underline{26.4} & 0.67 & \textbf{0.50} & 28.8 & 0.61 & 0.52 & \underline{18.9} & 0.67 & \underline{0.46} & \underline{25.2} & \textbf{0.69} \\
  \rowcolor{red!4}
  Pre-Route (D-Q1.7B) [T] & \underline{0.42} & 28.3 & 0.65 & 0.28 & \underline{27.4} & \textbf{0.64} & 0.46 & 28.0 & 0.66 & \underline{0.48} & \underline{26.9} & 0.62 & 0.52 & \textbf{17.5} & \underline{0.69} & 0.44 & \textbf{24.4} & \underline{0.68} \\
  \cmidrule(l{-1pt}r{-1pt}){1-19}
  \end{tabular}}

  \vspace{-12pt}
\end{table*}

\subsection{Generalization on Out-of-Distribution Data}

\textbf{Out-of-distribution evaluation.} Table~\ref{tab:wide-ood-lb7wide} reports representative LongBench-v2 results, compared to LaRA, LongBench-v2 is out-of-domain not only in data distribution but also in task format and evaluation protocol: LaRA uses open-ended QA with graded scores (0–4), whereas LongBench-v2 reformulates tasks as four-choice multiple-choice questions(MCQ) with binary correctness. This makes Self-Route appear more competitive in QA scores, since its weakness of sometimes routing partially correct answers (as in Appendix~\ref{app:selfroutefail}) is no longer penalized—under binary MCQ evaluation the router only needs to ensure correctness rather than differentiate answer quality. Nevertheless, Pre-Route remains more cost-efficient than Self-Route with substantially low LC usage, effectively offloading ``easy'' samples to maximize compute efficiency. Note that LC Rate reduction is significant ($p < 0.01$).

\begin{figure}[h]
  \vspace{-8pt}
  \centering
  \includegraphics[width=0.68\linewidth]{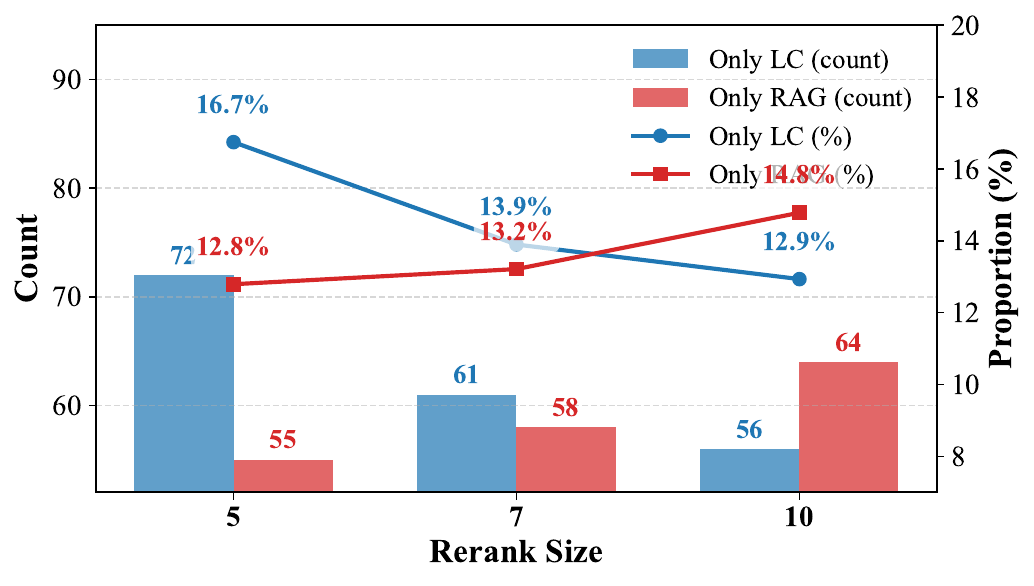}
  \vspace{-8pt}
   \caption{Qwen3-235B: Number of questions answered correctly \emph{only} by LC vs. \emph{only} by RAG.}
  \label{fig:lb_reranksize}
  \vspace{-10pt}
\end{figure}
\vspace{-2pt}

\textbf{Robustness to retrieval configuration.}
Fig.~\ref{fig:lb_reranksize} illustrates the LC vs. RAG balance shift as rerank size increases. Table~\ref{tab:wide-ood-lb7wide} (rerank size 7) and Appendix~\ref{app:additional-longbench-results} (sizes 5, 10) demonstrate Pre-Route's effectiveness across models. Pre-Route (Distilled) remains stable across all configurations, showing minimal sensitivity to retriever changes and consistently achieving higher accuracy with lower LC usage.

\vspace{-12pt}

\subsection{Pre-Route Guidelines Ablation}

\begin{table}[h]

\vspace{-6pt}
\centering
\caption{Ablation analysis. Removing reasoning steps reduces accuracy or increases LC rate, highlighting the necessity of the full Pre-Route design.}
\label{tab:ablation}
\resizebox{0.7\linewidth}{!}{
\begin{tabular}{lccc}
\toprule
\textbf{Variant} & \textbf{QA Score} & \textbf{LC Rate (\%)} & \textbf{Acc} \\
\midrule
\textbf{Pre-Route (full)}      & \textbf{3.38} & 20.7 & \textbf{0.68} \\
- No Reflection       & 3.33 & 20.8 & 0.65 \\
- No Decision Rules   & \textbf{3.38} & 45.3 & 0.57 \\
- No Step1            & 3.33 & \textbf{10.1} & \textbf{0.68} \\
- No Step2            & 3.37 & 27.0 & 0.66 \\
- No Step3            & 3.35 & 18.2 & 0.66 \\
- No Step4            & 3.31 & 17.2 & 0.66 \\
- No Step5            & 3.31 & 21.2 & 0.66 \\
- No Step6            & 3.35 & 24.3 & 0.67 \\
\bottomrule
\end{tabular}}
\vspace{-12pt}
\end{table}
To understand why the structured reasoning chain is effective, we conduct ablations by removing individual steps. As shown in Table~\ref{tab:ablation}, the \textbf{full guideline} achieves the best balance: highest accuracy with a moderate LC rate. Removing components leads either to QA Score drops or inflated LC usage, both stemming from incorrect route choices.

\textbf{Decision rules:} Dropping them causes the router to over-rely on LC, greatly increasing cost. \textbf{Reflection:} Removal slightly lowers accuracy, suggesting it helps refine borderline cases. \textbf{Step~1/2 (Task \& Distribution):} Without them, the router misjudges LC necessity—either overusing or underusing it. \textbf{Step~3–6 (Feasibility \& Trade-offs):} Each prevents misallocation; omitting them causes the router to use LC when RAG suffices or vice versa.

\subsection{\revised{Robustness to Metadata}}
\label{app:metadata}

Pre-Route is not built on the assumption of a fully curated metadata schema. Instead, it deliberately relies on \emph{low-cost metadata} that are almost always present in realistic long-context systems (e.g., document length, a leading snippet), while coarser attributes such as titles or task tags tend to be missing or noisy in practice. We therefore treat length and head as a minimal interface, and focus our analysis on metadata that may not be present.

Importantly, Pre-Route treats the leading snippet as a \emph{soft prior} rather than a hard rule. Decisions emerge from \emph{joint reasoning} over the query and structure, enabling the model to look beyond misleading signals. Appendix~\ref{app:selfroutefail} demonstrates this robustness: Pre-Route correctly selects LC based on distributional cues, even when the head snippet appears to be a sufficient local fact.

Table~\ref{tab:meta-ablation} compares performance across three metadata settings: \textbf{Full-Meta} (all fields), \textbf{Head-only} (length/head only), and \textbf{Generated-Meta} (metadata inferred by Qwen3-1.7B). Results confirm that Pre-Route outperforms the Self-Route baseline even in the \textbf{Head-only} setting, while \textbf{Generated-Meta} effectively closes the gap to \textbf{Full-Meta}, demonstrating robust routing using only readily available or lightweight inferred signals.

\begin{table}[t]
\vspace{-5pt}
\centering
\caption{Effect of metadata settings (Head-only, Generated-Meta, Full-Meta) on routing performance on the LaRA test split.}
\label{tab:meta-ablation}
\resizebox{\columnwidth}{!}{%
\begin{tabular}{l l l c c c}
\toprule
Answer Model & Router Model & Setting & QA$\uparrow$ & LC(\%)$\downarrow$ & Acc$\uparrow$ \\
\midrule
DeepSeek-R1  & Pre-Route(Q235B) [N]   & Head-only      & 3.42 & 20.3 & 0.68 \\
             &                        & Generated-Meta & 3.45 & 20.6 & 0.70 \\
             &                        & Full-Meta      & 3.47 & 27.2 & 0.70 \\
             \cdashline{2-6}
             & Pre-Route(D-Q1.7B) [N] & Head-only      & 3.43 & 18.5 & 0.69 \\
             &                        & Generated-Meta & 3.47 & \textbf{15.8} & \textbf{0.73} \\
             &                        & Full-Meta      & \textbf{3.51} & 20.6 & \textbf{0.73} \\
             \cdashline{2-6}

             & Self-Route (Baseline)  &                & 3.36 & 31.4 & 0.52 \\
\midrule

Qwen3-235B   & Pre-Route(Q235B) [N]   & Head-only      & 3.39 & 23.0 & 0.65 \\
             &                        & Generated-Meta & \textbf{3.44} & 23.2 & \textbf{0.70} \\
             &                        & Full-Meta      & 3.43 & 27.2 & 0.67 \\
             \cdashline{2-6}
             & Pre-Route(D-Q1.7B) [N] & Head-only      & 3.40 & 21.5 & 0.66 \\
             &                        & Generated-Meta & 3.43 & \textbf{20.4} & \textbf{0.70} \\
             &                        & Full-Meta      & 3.43 & 22.7 & 0.69 \\
             \cdashline{2-6}
             & Self-Route (Baseline)  &                & 3.34 & 33.9 & 0.52 \\
\bottomrule

\end{tabular}}
\vspace{-18pt}
\end{table}

\vspace{-12pt}

\section{Conclusion}
\vspace{-6pt}
We introduced \textsc{Pre-Route}, a proactive routing framework that elicits LLMs' latent routing ability to choose between RAG and long-context before answering. Using lightweight structured reasoning, Pre-Route improves routing accuracy and QA quality while reducing LC usage. Experiments on LaRA and LongBench-v2 show Pre-Route outperforms Self-Route with better accuracy–cost trade-offs. Through distillation, small models inherit planning ability for efficient plug-and-play deployment. These results confirm routing ability is latent but can be reliably elicited. \textbf{Future Work} includes extending to multi-modal or open-domain scenarios, hybrid routing.

\section*{Limitations}

Our work has several limitations. First, Pre-Route's effectiveness depends on the quality of metadata (e.g., document type, task type). While we demonstrate robustness with minimal metadata (length + head only), performance may degrade when metadata is completely absent or highly unreliable. Second, the routing decision is binary (RAG vs. LC), which may not capture more nuanced strategies such as hybrid approaches or adaptive retrieval granularity. Third, our evaluation focuses on English-language benchmarks; generalization to other languages and domains requires further investigation. Fourth, the distillation process requires a strong teacher model and high-quality filtered data, which may limit accessibility for some researchers. Finally, while Pre-Route reduces LC usage, the optimal LC selection threshold may vary across different deployment scenarios and cost constraints.

\section*{Ethical Considerations}

This work does not involve human subjects, sensitive personal data, or applications that may raise direct ethical concerns. Our experiments are conducted on publicly available datasets (LaRA and LongBench-v2), and we adhere to responsible AI research practices throughout this study. However, we acknowledge several broader considerations. First, routing systems like Pre-Route could potentially be used to optimize the cost of automated content generation at scale, which may impact certain professional sectors. Second, the system inherits the biases and limitations present in the underlying LLMs used for routing and answering. Third, the retrieval-augmented approach may propagate inaccuracies if the retrieved sources contain errors or misinformation. We encourage users of our system to implement appropriate content filtering and human oversight for high-stakes applications. Code and models are released for research purposes only.

\bibliography{custom}

\appendix

\section{Appendix}

\subsection{Related Work}

\paragraph{Long-context LLMs.}
Improving the ability of LLMs to handle long inputs has long been a key challenge. Early work reduced computational cost by modifying attention ~\citep{DBLP:journals/corr/abs-2004-05150_longformer, DBLP:conf/naacl/GuoAUONSY22_longt5} or compressing inputs ~\citep{DBLP:conf/acl/JiangWL0L0Q24_lonngllm}, while others explored distillation ~\citep{DBLP:conf/acl/HsiehLYNFRKLP23_lesssmall} and cascaded mechanisms ~\citep{DBLP:journals/tmlr/ChenZ024_reducecost}. Recently, frontier models such as GPT-5~\citep{openai2025introducinggpt5}, Gemini-2.5~\citep{DBLP:journals/corr/abs-2507-06261_gemini2.5}, Qwen-3~\citep{DBLP:journals/corr/abs-2505-09388_qwen3report}, and DeepSeek~\citep{deepseek2025r10528} have extended context windows to 128K tokens or way beyond. However, due to the quadratic complexity of Transformers, long-context reasoning remains expensive and unstable under ultra-long inputs, e.g., the ``lost-in-the-middle'' problem ~\citep{DBLP:journals/tacl/LiuLHPBPL24_lostinmiddle}.

\paragraph{Retrieval-Augmented Generation (RAG).}
Another parallel line is RAG~\citep{DBLP:conf/nips/LewisPPPKGKLYR020_ragnlp}), which augments LLMs with external retrieval. RAG shows strong performance in language modeling~\citep{DBLP:conf/iclr/KhandelwalLJZL20_memrag, DBLP:conf/naacl/ShiMYS0LZY24_ragreplug} and open-domain QA~\citep{DBLP:conf/icml/GuuLTPC20_ragpretrain, DBLP:conf/eacl/IzacardG21_opdomain}, at much lower cost than modeling entire documents~\citep{DBLP:conf/icml/BorgeaudMHCRM0L22_ragtrillion}. Subsequent work proposed answer correction~\citep{DBLP:journals/corr/abs-2401-15884_correct_rag}, critique~\citep{DBLP:conf/iclr/AsaiWWSH24_selfrag}, verification~\citep{DBLP:conf/naacl/LiZLYSQ24_llretrieval}, and adaptive retrieval~\citep{DBLP:conf/emnlp/WangLSL23_selfguide_rag, DBLP:conf/emnlp/ChengLLZYSLS0Q24_unirag, DBLP:conf/naacl/JeongBCHP24_adaptrag}, enhancing robustness in knowledge-intensive scenarios. Although retrieval introduces some overhead, RAG pipelines are modular and can be deployed locally.

\paragraph{Benchmarking RAG vs. LC.}
Evaluation has drawn attention as well. A number of benchmarks have been proposed, including both synthetic stress tests~\citep{kamradt2023needle,DBLP:journals/corr/abs-2404-06654_ruler,DBLP:conf/coling/SongZL25_countstar} and task-oriented datasets~\citep{DBLP:conf/acl/BaiLZL0HDLZHDTL24_longbench,DBLP:conf/acl/AnG0ZLZKQ24_leval,DBLP:conf/naacl/DasigiLBCSG21_qasper,narrativeqa,DBLP:conf/naacl/PangPJNPCPMT0B22_quality}. These studies revealed performance degradation as context length grows~\citep{DBLP:conf/acl/LevyJG24_inputlen_impact,DBLP:journals/corr/abs-2404-06654_ruler} and phenomena such as lost-in-the-middle~\citep{DBLP:journals/tacl/LiuLHPBPL24_lostinmiddle}. Nevertheless, most benchmarks do not exceed 128K tokens, limiting their ability to fully assess ultra-long context capabilities. This has motivated larger-scale benchmarks such as \textbf{LaRA}~\citep{DBLP:journals/corr/abs-2502-09977_lara}, \textbf{LongBench-v2}~\citep{bai2024longbench2}, and other related efforts~\citep{DBLP:conf/acl/ZhangCHXCH0TW0024_infbench,DBLP:conf/emnlp/0002IEBL23_zeroscroll,DBLP:conf/coling/DongTLZW24_bamboo,DBLP:conf/acl/LiWZZ24_loogle,wang2024loong}, which better match current LLM capabilities.

\paragraph{The LaRA benchmark.}
LaRA~\citep{DBLP:journals/corr/abs-2502-09977_lara} provides a more systematic comparison of RAG and LC in modern LLMs. It contains test cases across four tasks—\textit{localization, comparison, reasoning, hallucination detection}—and three types of natural long documents: \textit{novels, papers, reports}. Experiments on \textbf{7 open-source} and \textbf{4 proprietary} models show clear complementarity:
\begin{itemize}
    \item \textbf{Model capability:} weaker models rely more on RAG, while stronger ones (e.g., GPT-5, Gemini 2.5) perform better with LC.
    \item \textbf{Task type:} LC excels in reasoning and comparison, while RAG is more robust for hallucination detection and refusals.
    \item \textbf{Text structure:} structured texts (e.g., reports, papers) favor LC, while novels make RAG more cost-effective.
    \item \textbf{Context position \& retrieval granularity:} LC suffers from ``lost-in-the-middle,'' while RAG can cross-span retrieve, but is sensitive to chunk size and number.
\end{itemize}

\paragraph{Routing strategies and Self-Route.}
To combine both paradigms, routing-based methods have been explored. The most representative is Self-Route~\citep{DBLP:conf/emnlp/Li00MB24selfroute}, which adopts a \textit{failure-driven} mechanism: the model first answers using RAG, then falls back to LC if it outputs \textit{unanswerable}. This avoids explicit task recognition, is simple to implement, and works reasonably well in some cases. However, it has clear limitations: (1) reliance on retrieval failure signals without proactive task or information awareness; (2) extra cost from executing RAG before fallback; (3) misrouting due to over- or under-confidence; (4) lack of interpretability in decision logic. Optimizing \textbf{efficiency and performance} remains a central challenge for future routing methods.

\subsection{Optimality of the Ideal Label}
\label{app:proof-cost-minimal}

\begin{proposition}[Optimality of the Ideal Label]
Consider the ideal label defined in Eq.~\ref{eq:ideal-label}, which prescribes
LC if $U(\mathrm{LC};q,\mathcal D) > U(\mathrm{RAG};q,\mathcal D)$
and RAG otherwise (i.e., whenever $U_{\mathrm{LC}} \le U_{\mathrm{RAG}}$).
This rule ensures optimal QA performance by construction,
and among all decision rules $\pi$ that also achieve optimal performance,
the ideal label minimizes the expected cost by defaulting to
the lower-cost RAG in tie cases.
\end{proposition}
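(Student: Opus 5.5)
The argument is pointwise: fix the instance $(q,\mathcal D)$, compare the ideal label with any other decision rule on that instance, and then take expectations. I will treat a decision rule $\pi$ as a possibly randomized map from instances to $\{\mathrm{RAG},\mathrm{LC}\}$. Its expected performance is $\mathbb E[U(\pi;q,\mathcal D)]$ and its expected cost is $\mathbb E[C(\pi;q,\mathcal D)]$, where $C$ is decomposed as in Eq.~\ref{eq:cost-decompose}. Since $C_{\text{route}}(m)$ does not depend on the chosen branch, it cancels in every comparison. The only cost assumption I need is the standing one from Sec.~\ref{sec:cost}: $C_{\text{answer}}(\mathrm{RAG};q,\mathcal D) < C_{\text{answer}}(\mathrm{LC};q,\mathcal D)$ on each instance. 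Should only the averages $C_{\text{RAG}}<C_{\text{LC}}$ be available, I will state this per-instance version explicitly as an assumption.

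\textbf{Step 1 (performance optimality).} On each instance, $\hat y_{\text{ideal}}$ picks LC exactly when $U_{\mathrm{LC}}>U_{\mathrm{RAG}}$ and RAG otherwise. Hence $U(\hat y_{\text{ideal}})=\max\{U_{\mathrm{LC}},U_{\mathrm{RAG}}\}$ pointwise. For any $\pi$ we have $U(\pi)\le \max\{U_{\mathrm{LC}},U_{\mathrm{RAG}}\}$ pointwise, and taking expectations gives optimality.

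\textbf{Step 2 (characterize the optimal rules).} Let $\pi$ achieve the same expected performance as the ideal label. Then the nonnegative quantity $\max\{U_{\mathrm{LC}},U_{\mathrm{RAG}}\}-U(\pi)$ has zero expectation, so it vanishes almost surely. This step converts an aggregate equality into a pointwise constraint, and it is the one subtle point: it needs integrability of $U$ and requires that "optimal" means expected performance. Consequences, holding almost surely:
\begin{itemize}
\item when $U_{\mathrm{LC}}>U_{\mathrm{RAG}}$, $\pi$ must choose LC;
\item when $U_{\mathrm{LC}}<U_{\mathrm{RAG}}$, $\pi$ must choose RAG;
\item only on the tie set $\{U_{\mathrm{LC}}=U_{\mathrm{RAG}}\}$ is $\pi$ free, with some LC probability $p_\pi(q,\mathcal D)\in[0,1]$.
\end{itemize}

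\textbf{Step 3 (cost comparison).} Off the tie set, $\pi$ and $\hat y_{\text{ideal}}$ coincide, so their costs agree there. On the tie set, the expected answering cost of $\pi$ is $p_\pi C_{\text{answer}}(\mathrm{LC})+(1-p_\pi)C_{\text{answer}}(\mathrm{RAG})\ge C_{\text{answer}}(\mathrm{RAG})$, and the right-hand side is exactly the cost of the ideal label. Integrating gives $\mathbb E[C(\pi)]\ge \mathbb E[C(\hat y_{\text{ideal}})]$, with equality iff $p_\pi=0$ almost surely on the tie set. This also yields uniqueness up to null sets.

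The remark about "marginal differences" in the text does not enter the formal statement, which uses the strict inequality in Eq.~\ref{eq:ideal-label}.
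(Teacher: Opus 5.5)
Your proposal is correct and follows the same route as the paper's proof. Step 1 gets performance optimality pointwise by construction, and the cost comparison is confined to the tie set, where choosing LC strictly adds $C_{\mathrm{LC}}-C_{\mathrm{RAG}}>0$. Your Step 2 also tightens the paper's argument: the paper simply \emph{supposes} that a performance-optimal alternative agrees with the ideal label on strictly-better cases and deviates deterministically on ties, whereas you derive that agreement (almost surely, for possibly randomized rules) from equality of expected performance.
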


\begin{proof}
\textbf{Step 1 (Performance).}
By definition (Eq.~\ref{eq:ideal-label}), the ideal label selects the option with maximal performance: LC when $U_{\mathrm{LC}} > U_{\mathrm{RAG}}$, and RAG otherwise.
Thus no alternative strategy can yield higher performance on any instance, so QA performance is optimal.

\textbf{Step 2 (Cost).}
The cost of a single query can be written as
\[
C_{\text{answer}} = \mathbf{1}\{y=\mathrm{LC}\}\,C_{\mathrm{LC}}
+ \mathbf{1}\{y=\mathrm{RAG}\}\,C_{\mathrm{RAG}},
\]
where $C_{\mathrm{LC}} > C_{\mathrm{RAG}} \ge 0$.
Suppose an alternative strategy $\pi'$ makes the same decisions as the ideal label on strictly-better cases, but deviates on a subset $\mathcal{T}$ of tie cases by choosing LC instead of RAG.
For each $i \in \mathcal{T}$, performance remains equal, but cost strictly increases by $\Delta C_i = C_{\mathrm{LC}} - C_{\mathrm{RAG}} > 0$.
Hence
\[
\mathbb{E}[C(\pi') - C(\pi)]
= \sum_{i\in\mathcal{T}} p_i \Delta C_i \;>\; 0,
\]
where $p_i$ is the probability of case $i$.

\textbf{Conclusion.}
The ideal label therefore ensures optimal QA performance by construction,
and among all decision rules $\pi$ that also achieve optimal performance,
the ideal label minimizes the expected cost by defaulting to
the lower-cost RAG in tie cases.
\qed
\end{proof}

\subsection{Datasets Details}
\label{app:datasets}
We provide additional details of the datasets used in our experiments.


\begin{table}[htbp]
\centering
\caption{Statistics of LongBench-v2 datasets.}
\resizebox{0.8\linewidth}{!}{
\begin{tabular}{lrr}
\toprule
\textbf{Category} & \textbf{\#data} & \textbf{Length} \\
\midrule
I. Single-Document QA        & 175 & 51k--96k \\
II. Multi-Document QA        & 125 & 15k--129k \\
III. Long In-context Learning& 81 & 61k--132k \\
IV. Dialogue Understanding   &  39 & 25k--77k \\
V. Code Repository           &  50 & 167k \\
VI. Structured Data          &  33 & 49k--52k \\
\midrule
\textbf{Total}               & 503 & up to 167k \\
\bottomrule
\end{tabular}}
\vspace{-12pt}
\end{table}

\begin{table}[htbp]
\centering
\caption{Statistics of LaRA.}\resizebox{0.85\linewidth}{!}{
\begin{tabular}{lcccc}
\toprule
\textbf{Context} & \textbf{Location} & \textbf{Reasoning} & \textbf{Comparison} & \textbf{Hallucination} \\
\midrule
\multicolumn{5}{c}{\textbf{32k length}} \\
Novel      & 25673 & 25908 & 25681 & 25433 \\
Financial  & 27548 & 27531 & 27546 & 27527 \\
Paper      & 28078 & 28088 & 27708 & 28081 \\
\# of Cases&   276 &   230 &   151 &   230 \\
\midrule
\multicolumn{5}{c}{\textbf{128k length}} \\
Novel      & 96452 & 96226 & 95903 & 96182 \\
Financial  & 92684 & 92831 & 92830 & 92812 \\
Paper      & 93911 & 93818 & 94731 & 93890 \\
\# of Cases&   489 &   374 &   198 &   378 \\
\bottomrule
\end{tabular}
}
\vspace{-4pt}
\end{table}

\begin{table}[htbp]
\vspace{-4pt}
\centering
\caption{Statistics of LaRA distribution of the test set (0.7:0.1:0.2 split).}
\resizebox{0.9\linewidth}{!}{
\begin{tabular}{lcccc}
\toprule
\textbf{Context} & \textbf{Location} & \textbf{Reasoning} & \textbf{Comparison} & \textbf{Hallucination} \\
\midrule
\multicolumn{5}{c}{\textbf{32k length}} \\
Novel      & 16 & 13 & 10 & 14 \\
Financial  & 33 & 24 & 14 & 20 \\
Paper      & 10 & 11 &  6 & 12 \\
\# of Cases& 59 & 48 & 30 & 46 \\
\midrule
\multicolumn{5}{c}{\textbf{128k length}} \\
Novel      & 50 & 35 & 15 & 32 \\
Financial  & 19 & 16 & 10 & 20 \\
Paper      & 27 & 27 & 17 & 27 \\
\# of Cases& 96 & 78 & 42 & 79 \\
\midrule
\multicolumn{5}{c}{\textbf{Overall (32k + 128k)}} \\
Novel      & 66 & 48 & 25 & 46 \\
Financial  & 52 & 40 & 24 & 40 \\
Paper      & 37 & 38 & 23 & 39 \\
\# of Cases& 155 & 126 & 72 & 125 \\
\bottomrule
\end{tabular}
}
\end{table}
\FloatBarrier

\subsection{Additional Robustness Results on LongBench-v2 (rerank size = 5 and 10)}
\label{app:additional-longbench-results}

To provide comprehensive evaluation of Pre-Route's robustness across different retrieval configurations, we present detailed experimental results for rerank sizes 5 and 10 in this appendix. These results complement the representative findings shown in Table~\ref{tab:wide-ood-lb7wide} and further demonstrate Pre-Route's consistent performance advantages across varying retrieval settings.

\begin{table*}[htbp]
\vspace{-4pt}
\centering
\LARGE
\caption{LongBench-v2 (OOD, rerank size = 5): Pre-Route results;
\textbf{Best} and \underline{Second} per answer model. \emph{Pre-Route Variants:} Large: R1 (DeepSeek-R1) or Q235B (Qwen3-235B); Small: Q1.7B (Qwen3-1.7B); Distilled: D-Q1.7B (Distilled-Qwen3-1.7B).}
\vspace{-4pt}
\label{tab:wide-ood-lb5wide}
\resizebox{1\textwidth}{!}{%
\begin{tabular}{l|rrr|rrr|rrr|rrr|rrr|rrr}
\toprule
\multirow{2}{*}{\begin{tabular}[c]{@{}l@{}}\textbf{Answer Model}\\\textbf{Router Model}\end{tabular}} & \multicolumn{3}{c|}{Qwen3-1.7B [N]} & \multicolumn{3}{c|}{Qwen3-1.7B [T]} & \multicolumn{3}{c|}{Qwen3-4B [N]} & \multicolumn{3}{c|}{Qwen3-4B [T]} & \multicolumn{3}{c|}{Qwen3-8B [N]} & \multicolumn{3}{c}{Qwen3-8B [T]} \\
 & \textbf{QA$\uparrow$} & \textbf{LC(\%)$\downarrow$} & \textbf{Acc$\uparrow$} & \textbf{QA$\uparrow$} & \textbf{LC(\%)$\downarrow$} & \textbf{Acc$\uparrow$} & \textbf{QA$\uparrow$} & \textbf{LC(\%)$\downarrow$} & \textbf{Acc$\uparrow$} & \textbf{QA$\uparrow$} & \textbf{LC(\%)$\downarrow$} & \textbf{Acc$\uparrow$} & \textbf{QA$\uparrow$} & \textbf{LC(\%)$\downarrow$} & \textbf{Acc$\uparrow$} & \textbf{QA$\uparrow$} & \textbf{LC(\%)$\downarrow$} & \textbf{Acc$\uparrow$} \\
\cmidrule(l{-1pt}r{-1pt}){1-19}
\cmidrule(l{-1pt}r{-1pt}){1-19}
Always-LC (Baseline)  & 0.29 & 100.0 & 0.38 & 0.28 & 100.0 & 0.31 & 0.30 & 100.0 & 0.30 & 0.37 & 100.0 & 0.40 & 0.40 & 100.0 & 0.40 & 0.40 & 100.0 & 0.37 \\
Always-RAG (Baseline)  & 0.33 & 0.0 & 0.62 & 0.34 & 0.0 & 0.69 & 0.31 & 0.0 & 0.70 & 0.34 & 0.0 & 0.60 & 0.34 & 0.0 & 0.60 & 0.37 & 0.0 & 0.63 \\
Self-Route (Baseline) & \textbf{0.32} & 34.9 & 0.63 & \textbf{0.33} & 28.1 & 0.68 & 0.30 & 22.7 & 0.69 & \textbf{0.37} & 33.8 & 0.65 & \textbf{0.38} & 31.3 & 0.64 & \textbf{0.41} & 32.3 & 0.67 \\
\cmidrule(l{-1pt}r{-1pt}){1-19}
\rowcolor{red!4}
Pre-Route (R1) [T] & 0.30 & 24.8 & 0.68 & \textbf{0.33} & 23.5 & \underline{0.71} & \underline{0.31} & 24.7 & 0.69 & 0.35 & 24.5 & \textbf{0.70} & \underline{0.37} & \underline{19.6} & \textbf{0.72} & 0.39 & \textbf{19.9} & \textbf{0.71} \\
\rowcolor{red!4}
Pre-Route (Q235B) [N] & \underline{0.31} & 39.7 & 0.58 & 0.31 & 37.8 & 0.60 & \underline{0.31} & 39.5 & 0.59 & \underline{0.36} & 38.7 & 0.59 & \textbf{0.38} & 31.8 & 0.64 & \underline{0.40} & 33.0 & 0.63 \\
\rowcolor{red!4}
Pre-Route (Q235B) [T] & \underline{0.31} & 29.7 & 0.66 & \underline{0.32} & 29.4 & 0.65 & \textbf{0.32} & 27.0 & 0.69 & 0.34 & 30.1 & 0.64 & \underline{0.37} & 25.8 & 0.68 & 0.38 & 29.6 & 0.63 \\
Pre-Route (Q1.7B) [N] & 0.30 & 37.6 & 0.56 & 0.31 & 38.9 & 0.56 & 0.30 & 41.2 & 0.56 & 0.34 & 38.2 & 0.58 & 0.35 & 39.8 & 0.55 & 0.38 & 38.1 & 0.58 \\
Pre-Route (Q1.7B) [T] & 0.30 & 41.7 & 0.54 & \underline{0.32} & 38.7 & 0.58 & \textbf{0.32} & 42.1 & 0.58 & 0.34 & 40.7 & 0.55 & \textbf{0.38} & 38.9 & 0.61 & 0.37 & 40.0 & 0.54 \\
\rowcolor{red!4}
Pre-Route (D-Q1.7B) [N] & \underline{0.31} & \underline{7.3} & \underline{0.81} & \textbf{0.33} & \underline{8.0} & \textbf{0.80} & \underline{0.31} & \underline{20.8} & \underline{0.70} & 0.34 & \underline{21.1} & \underline{0.67} & \textbf{0.38} & 20.7 & \underline{0.70} & 0.38 & 21.9 & \underline{0.68} \\
\rowcolor{red!4}
Pre-Route (D-Q1.7B) [T] & \textbf{0.32} & \textbf{6.6} & \textbf{0.82} & \textbf{0.33} & \textbf{7.5} & \textbf{0.80} & \underline{0.31} & \textbf{20.2} & \textbf{0.71} & 0.34 & \textbf{20.4} & 0.66 & 0.36 & \textbf{18.9} & 0.69 & 0.36 & \underline{21.2} & \underline{0.68} \\
\cmidrule(l{-1pt}r{-1pt}){1-19}
\cmidrule(l{-1pt}r{-1pt}){1-19}
\multirow{2}{*}{\begin{tabular}[c]{@{}l@{}}\textbf{Answer Model}\\\textbf{Router Model}\end{tabular}} & \multicolumn{3}{c|}{Qwen3-30B [N]} & \multicolumn{3}{c|}{Qwen3-30B [T]} & \multicolumn{3}{c|}{Qwen3-235B [N]} & \multicolumn{3}{c|}{Qwen3-235B [T]} & \multicolumn{3}{c|}{DeepSeek-R1 [T]} & \multicolumn{3}{c}{Qwen-Max [N]} \\
 & \textbf{QA$\uparrow$} & \textbf{LC(\%)$\downarrow$} & \textbf{Acc$\uparrow$} & \textbf{QA$\uparrow$} & \textbf{LC(\%)$\downarrow$} & \textbf{Acc$\uparrow$} & \textbf{QA$\uparrow$} & \textbf{LC(\%)$\downarrow$} & \textbf{Acc$\uparrow$} & \textbf{QA$\uparrow$} & \textbf{LC(\%)$\downarrow$} & \textbf{Acc$\uparrow$} & \textbf{QA$\uparrow$} & \textbf{LC(\%)$\downarrow$} & \textbf{Acc$\uparrow$} & \textbf{QA$\uparrow$} & \textbf{LC(\%)$\downarrow$} & \textbf{Acc$\uparrow$} \\
\cmidrule(l{-1pt}r{-1pt}){1-19}
\cmidrule(l{-1pt}r{-1pt}){1-19}
Always-LC (Baseline)  & 0.40 & 100.0 & 0.40 & 0.36 & 100.0 & 0.56 & 0.46 & 100.0 & 0.60 & 0.52 & 100.0 & 0.54 & 0.53 & 100.0 & 0.49 & 0.48 & 100.0 & 0.59 \\
Always-RAG (Baseline)  & 0.39 & 0.0 & 0.60 & 0.22 & 0.0 & 0.44 & 0.42 & 0.0 & 0.40 & 0.43 & 0.0 & 0.46 & 0.45 & 0.0 & 0.51 & 0.40 & 0.0 & 0.41 \\
Self-Route (Baseline) & 0.40 & 38.0 & 0.61 & \textbf{0.28} & 47.8 & 0.51 & 0.45 & 60.9 & 0.44 & \textbf{0.49} & 52.2 & 0.52 & 0.46 & 44.4 & 0.53 & \textbf{0.45} & 57.4 & 0.46 \\
\cmidrule(l{-1pt}r{-1pt}){1-19}
\rowcolor{red!4}
Pre-Route (R1) [T] & \underline{0.41} & \textbf{25.9} & \textbf{0.70} & 0.25 & \textbf{25.2} & \textbf{0.62} & \textbf{0.47} & \textbf{23.5} & \textbf{0.72} & 0.45 & \textbf{23.8} & \textbf{0.63} & 0.47 & 27.6 & 0.61 & 0.43 & \textbf{25.8} & \textbf{0.68} \\
\rowcolor{red!4}
Pre-Route (Q235B) [N] & \textbf{0.42} & 39.0 & 0.60 & \underline{0.27} & 40.0 & 0.53 & \textbf{0.47} & 37.4 & 0.62 & \underline{0.47} & 41.3 & 0.55 & \textbf{0.51} & 41.8 & 0.56 & \underline{0.44} & 40.0 & 0.59 \\
\rowcolor{red!4}
Pre-Route (Q235B) [T] & \underline{0.41} & 29.2 & 0.67 & 0.24 & \underline{28.5} & 0.58 & 0.45 & 27.4 & \underline{0.67} & \underline{0.47} & \underline{28.5} & \underline{0.61} & \textbf{0.51} & 31.1 & \underline{0.65} & \underline{0.44} & 29.0 & 0.65 \\
Pre-Route (Q1.7B) [N] & 0.38 & 39.9 & 0.57 & \underline{0.27} & 35.3 & 0.58 & 0.45 & 36.7 & 0.61 & \underline{0.47} & 37.1 & 0.55 & 0.47 & 42.3 & 0.53 & \textbf{0.45} & 38.7 & 0.60 \\
Pre-Route (Q1.7B) [T] & 0.40 & 41.0 & 0.57 & \underline{0.27} & 41.2 & 0.54 & 0.44 & 40.5 & 0.56 & 0.46 & 43.0 & 0.55 & 0.47 & 41.3 & 0.57 & \underline{0.44} & 41.8 & 0.56 \\
\rowcolor{red!4}
Pre-Route (D-Q1.7B) [N] & \textbf{0.42} & \underline{27.6} & \underline{0.68} & \underline{0.27} & 31.2 & \underline{0.61} & \underline{0.46} & 27.9 & \underline{0.67} & \underline{0.47} & 28.7 & \underline{0.61} & 0.46 & \textbf{18.4} & 0.64 & \underline{0.44} & \underline{27.3} & \underline{0.66} \\
\rowcolor{red!4}
Pre-Route (D-Q1.7B) [T] & \textbf{0.42} & 28.0 & 0.67 & \textbf{0.28} & 29.5 & \textbf{0.62} & 0.45 & \underline{26.7} & 0.66 & 0.46 & 29.0 & 0.58 & \underline{0.48} & \underline{21.9} & \textbf{0.66} & \textbf{0.45} & 27.7 & \underline{0.66} \\
\cmidrule(l{-1pt}r{-1pt}){1-19}
\end{tabular}}

\vspace{-4pt}
\end{table*}

\begin{table*}[htbp]
\vspace{-4pt}
\centering
\LARGE

\caption{LongBench-v2 (OOD, rerank size = 10): Pre-Route results;
\textbf{Best} and \underline{Second} per answer model. \emph{Pre-Route Variants:} Large: R1 (DeepSeek-R1) or Q235B (Qwen3-235B); Small: Q1.7B (Qwen3-1.7B); Distilled: D-Q1.7B (Distilled-Qwen3-1.7B).}
\vspace{-4pt}
\label{tab:wide-ood-lb10wide}
\resizebox{1\textwidth}{!}{%
\begin{tabular}{l|rrr|rrr|rrr|rrr|rrr|rrr}
\toprule
\multirow{2}{*}{\begin{tabular}[c]{@{}l@{}}\textbf{Answer Model}\\\textbf{Router Model}\end{tabular}} & \multicolumn{3}{c|}{Qwen3-1.7B [N]} & \multicolumn{3}{c|}{Qwen3-1.7B [T]} & \multicolumn{3}{c|}{Qwen3-4B [N]} & \multicolumn{3}{c|}{Qwen3-4B [T]} & \multicolumn{3}{c|}{Qwen3-8B [N]} & \multicolumn{3}{c}{Qwen3-8B [T]} \\
 & \textbf{QA$\uparrow$} & \textbf{LC(\%)$\downarrow$} & \textbf{Acc$\uparrow$} & \textbf{QA$\uparrow$} & \textbf{LC(\%)$\downarrow$} & \textbf{Acc$\uparrow$} & \textbf{QA$\uparrow$} & \textbf{LC(\%)$\downarrow$} & \textbf{Acc$\uparrow$} & \textbf{QA$\uparrow$} & \textbf{LC(\%)$\downarrow$} & \textbf{Acc$\uparrow$} & \textbf{QA$\uparrow$} & \textbf{LC(\%)$\downarrow$} & \textbf{Acc$\uparrow$} & \textbf{QA$\uparrow$} & \textbf{LC(\%)$\downarrow$} & \textbf{Acc$\uparrow$} \\
\cmidrule(l{-1pt}r{-1pt}){1-19}
\cmidrule(l{-1pt}r{-1pt}){1-19}
Always-LC (Baseline)  & 0.29 & 100.0 & 0.32 & 0.27 & 100.0 & 0.30 & 0.30 & 100.0 & 0.26 & 0.36 & 100.0 & 0.30 & 0.40 & 100.0 & 0.27 & 0.40 & 100.0 & 0.28 \\
Always-RAG (Baseline)  & 0.32 & 0.0 & 0.68 & 0.31 & 0.0 & 0.70 & 0.33 & 0.0 & 0.74 & 0.39 & 0.0 & 0.70 & 0.43 & 0.0 & 0.73 & 0.43 & 0.0 & 0.72 \\
Self-Route (Baseline) & \textbf{0.33} & 28.2 & \underline{0.69} & \underline{0.31} & 24.2 & 0.70 & \underline{0.33} & \textbf{19.5} & \textbf{0.74} & 0.38 & 26.0 & \textbf{0.68} & 0.41 & \underline{21.1} & \underline{0.72} & 0.41 & 24.3 & 0.70 \\
\cmidrule(l{-1pt}r{-1pt}){1-19}
\rowcolor{red!4}
Pre-Route (R1) [T] & 0.31 & 25.9 & 0.68 & \textbf{0.32} & 24.8 & 0.71 & \underline{0.33} & 23.6 & 0.70 & \textbf{0.40} & 29.5 & 0.64 & \underline{0.44} & \textbf{19.3} & \textbf{0.74} & \underline{0.45} & \textbf{20.9} & \textbf{0.76} \\
\rowcolor{red!4}
Pre-Route (Q235B) [N] & 0.31 & 39.5 & 0.58 & \underline{0.31} & 40.2 & 0.59 & \underline{0.33} & 38.6 & 0.57 & \underline{0.39} & 40.1 & 0.57 & \textbf{0.45} & 31.7 & 0.66 & \textbf{0.46} & 32.8 & 0.67 \\
\rowcolor{red!4}
Pre-Route (Q235B) [T] & \underline{0.32} & 29.7 & 0.66 & \underline{0.31} & 27.5 & 0.68 & \textbf{0.34} & 29.4 & 0.66 & \underline{0.39} & 30.4 & 0.64 & 0.43 & 27.4 & 0.66 & \underline{0.45} & 27.5 & 0.70 \\
Pre-Route (Q1.7B) [N] & \textbf{0.33} & 39.9 & 0.59 & \underline{0.31} & 34.1 & 0.62 & 0.32 & 35.8 & 0.58 & 0.36 & 38.0 & 0.56 & 0.41 & 40.0 & 0.56 & 0.42 & 36.5 & 0.61 \\
Pre-Route (Q1.7B) [T] & 0.31 & 40.3 & 0.56 & \underline{0.31} & 41.1 & 0.57 & \underline{0.33} & 41.6 & 0.56 & 0.38 & 38.9 & 0.56 & 0.43 & 39.1 & 0.60 & 0.43 & 42.2 & 0.57 \\
\rowcolor{red!4}
Pre-Route (D-Q1.7B) [N] & \textbf{0.33} & \underline{7.6} & \textbf{0.83} & \underline{0.31} & \textbf{7.5} & \textbf{0.82} & \textbf{0.34} & 21.2 & \underline{0.71} & 0.38 & \underline{23.7} & \textbf{0.68} & 0.42 & 23.4 & 0.71 & 0.42 & \underline{21.6} & 0.69 \\
\rowcolor{red!4}
Pre-Route (D-Q1.7B) [T] & \underline{0.32} & \textbf{6.8} & \textbf{0.83} & \underline{0.31} & \underline{9.5} & \underline{0.80} & 0.32 & \underline{20.6} & 0.70 & 0.37 & \textbf{22.4} & \underline{0.66} & 0.42 & 22.0 & \underline{0.72} & 0.44 & \textbf{20.9} & \underline{0.71} \\
\cmidrule(l{-1pt}r{-1pt}){1-19}
\cmidrule(l{-1pt}r{-1pt}){1-19}
\multirow{2}{*}{\begin{tabular}[c]{@{}l@{}}\textbf{Answer Model}\\\textbf{Router Model}\end{tabular}} & \multicolumn{3}{c|}{Qwen3-30B [N]} & \multicolumn{3}{c|}{Qwen3-30B [T]} & \multicolumn{3}{c|}{Qwen3-235B [N]} & \multicolumn{3}{c|}{Qwen3-235B [T]} & \multicolumn{3}{c|}{DeepSeek-R1 [T]} & \multicolumn{3}{c}{Qwen-Max [N]} \\
 & \textbf{QA$\uparrow$} & \textbf{LC(\%)$\downarrow$} & \textbf{Acc$\uparrow$} & \textbf{QA$\uparrow$} & \textbf{LC(\%)$\downarrow$} & \textbf{Acc$\uparrow$} & \textbf{QA$\uparrow$} & \textbf{LC(\%)$\downarrow$} & \textbf{Acc$\uparrow$} & \textbf{QA$\uparrow$} & \textbf{LC(\%)$\downarrow$} & \textbf{Acc$\uparrow$} & \textbf{QA$\uparrow$} & \textbf{LC(\%)$\downarrow$} & \textbf{Acc$\uparrow$} & \textbf{QA$\uparrow$} & \textbf{LC(\%)$\downarrow$} & \textbf{Acc$\uparrow$} \\
\cmidrule(l{-1pt}r{-1pt}){1-19}
\cmidrule(l{-1pt}r{-1pt}){1-19}
Always-LC (Baseline)  & 0.41 & 100.0 & 0.31 & 0.36 & 100.0 & 0.46 & 0.46 & 100.0 & 0.50 & 0.51 & 100.0 & 0.49 & 0.53 & 100.0 & 0.38 & 0.48 & 100.0 & 0.47 \\
Always-RAG (Baseline)  & 0.45 & 0.0 & 0.69 & 0.25 & 0.0 & 0.54 & 0.48 & 0.0 & 0.50 & 0.45 & 0.0 & 0.51 & 0.49 & 0.0 & 0.62 & 0.48 & 0.0 & 0.53 \\
Self-Route (Baseline) & \underline{0.46} & 29.0 & 0.70 & \textbf{0.30} & 35.0 & 0.60 & 0.48 & 54.0 & 0.49 & \textbf{0.50} & 44.0 & 0.57 & 0.50 & 28.9 & 0.63 & 0.48 & 46.2 & 0.53 \\
\cmidrule(l{-1pt}r{-1pt}){1-19}
\rowcolor{red!4}
Pre-Route (R1) [T] & 0.45 & \textbf{22.8} & \textbf{0.73} & 0.27 & \textbf{25.0} & \textbf{0.64} & \textbf{0.51} & \textbf{25.6} & \textbf{0.73} & 0.47 & \textbf{23.3} & \textbf{0.66} & 0.51 & 24.5 & \textbf{0.69} & \underline{0.49} & 25.8 & \textbf{0.70} \\
\rowcolor{red!4}
Pre-Route (Q235B) [N] & \underline{0.46} & 36.3 & 0.62 & \underline{0.29} & 37.9 & 0.58 & \textbf{0.51} & 38.3 & 0.63 & 0.48 & 40.2 & 0.57 & 0.51 & 39.7 & 0.58 & \textbf{0.50} & 39.6 & 0.60 \\
\rowcolor{red!4}
Pre-Route (Q235B) [T] & 0.45 & 29.7 & 0.67 & 0.26 & 31.1 & 0.58 & \textbf{0.51} & \underline{26.3} & \underline{0.71} & 0.48 & 31.0 & 0.61 & 0.50 & 32.4 & 0.61 & \textbf{0.50} & 30.8 & 0.67 \\
Pre-Route (Q1.7B) [N] & 0.45 & 48.0 & 0.53 & \textbf{0.30} & 42.3 & 0.57 & \underline{0.50} & 35.8 & 0.62 & \textbf{0.50} & 36.7 & 0.60 & \textbf{0.53} & 41.7 & 0.59 & 0.48 & 38.7 & 0.59 \\
Pre-Route (Q1.7B) [T] & \underline{0.46} & 43.2 & 0.57 & \textbf{0.30} & 41.1 & 0.57 & \underline{0.50} & 42.3 & 0.57 & \underline{0.49} & 39.5 & 0.56 & 0.50 & 43.6 & 0.52 & 0.48 & 41.1 & 0.57 \\
\rowcolor{red!4}
Pre-Route (D-Q1.7B) [N] & \underline{0.46} & 25.9 & 0.71 & \underline{0.29} & 27.6 & \underline{0.63} & \underline{0.50} & 27.5 & 0.70 & \textbf{0.50} & 30.0 & \underline{0.62} & \textbf{0.53} & \underline{23.0} & \underline{0.66} & 0.48 & \textbf{23.9} & \underline{0.69} \\
\rowcolor{red!4}
Pre-Route (D-Q1.7B) [T] & \textbf{0.47} & \underline{25.2} & \underline{0.72} & 0.28 & \underline{26.9} & 0.61 & \textbf{0.51} & 27.5 & 0.70 & 0.48 & \underline{29.0} & 0.61 & \underline{0.52} & \textbf{19.1} & \textbf{0.69} & 0.48 & \underline{25.6} & \underline{0.69} \\
\cmidrule(l{-1pt}r{-1pt}){1-19}
\end{tabular}}

\vspace{-4pt}
\end{table*}

\newpage

\subsection{Case Study on the Necessity of a Finer-grained Evaluation Metric}

To illustrate why binary evaluation is insufficient, we present a case study highlighting the value of a 4-point scoring rubric. This example shows how two answers may both be judged "correct" under binary metrics, yet differ substantially in completeness and alignment with ground truth. The case study below (Tab.~\ref{app:case_study_metric}) concretely demonstrates how finer-grained evaluation better reflects qualitative differences.

\label{app:case_study_metric}
\begin{tcolorbox}[
  title=Case Study 1: ``Why does Stella repeatedly think their father will `kill' Miles?'',
  colback=gray!5,
  colframe=black,
  colbacktitle=black,
  coltitle=white,
  fonttitle=\bfseries,
  breakable,
  enhanced jigsaw
]

\subsubsection*{1) Initial Analysis: The Query and RAG's Scope}

\textbf{Query:} \textit{"Why does Stella repeatedly think that their father will 'kill' Miles?"}

The presence of the keyword \textbf{repeatedly} is crucial, as it implies that the necessary context is likely scattered across multiple passages, requiring a broader understanding than localized snippets might provide.

\textbf{RAG Retrieved Snippets Analysis:} The retrieved passages contain phrases like \textbf{"Miles was expelled from school"} and \textbf{"dad will be very angry."} These snippets allow a model to construct a direct causal chain \textbf{but inherently lack the historical context needed to address the "repeatedly" aspect of the query}.

\textbf{Ground Truth:} Stella knows that Miles has been expelled from multiple schools, and when she realizes he got kicked out again, she's convinced their father will be furious and punish Miles harshly because it's an ongoing issue, and they have high expectations for him.

\medskip
\hrule 
\medskip

\subsubsection*{2) Comparative Analysis and Scoring}

\textbf{Long-Context Answer (Score: 4/4):}
"Stella repeatedly thinks their father will 'kill' Miles because she knows he was expelled \textbf{again}. She anticipates not only her father's anger but also his \textbf{disappointment}, given the \textbf{high expectations} they have for Miles, and thus fears a \textbf{severe outcome}."

\textbf{RAG Answer (Score: 3/4):}
"Stella thinks their father will 'kill' Miles because Miles was expelled from school, and she knows their father will be very angry and punish him severely."

\textbf{Analysis:}
Under a binary (correct/incorrect) system, both answers would likely be marked "correct" as they identify the core reason (the expulsion). However, this masks a substantial quality gap. To properly evaluate them, we compare them against the key components of the ground truth:
\begin{itemize}
    \item The expulsion is a repeated event (\textbf{again / multiple schools}).
    \item The father has \textbf{high expectations} for Miles.
    \item The father's reaction includes extreme \textbf{anger and disappointment}.
    \item The consequences are expected to be \textbf{severe}.
\end{itemize}

The \textbf{RAG answer} covers the basic causal chain but fails to address the crucial contexts of \textbf{repetition} and \textbf{high expectations}. In contrast, the \textbf{Long-Context answer} successfully incorporates these nuances, aligning almost perfectly with the ground truth.

Our 4-point rubric is designed to capture these critical distinctions. By assigning a higher score to the Long-Context answer, we can accurately reflect its deeper contextual alignment and superior quality.

\textbf{Conclusion:} The Long-Context strategy is demonstrably superior for this query (\texttt{better\_strategy = long\_context}).

\end{tcolorbox}

\subsection{Case Studies on Why Self-Route Sometimes Fails on LaRA}
\label{app:selfroutefail}
Across our case studies, a consistent pattern emerges: \textbf{Self-Route tends to fail at the extremes of confidence}.

\begin{itemize}[leftmargin=1.2em]
  \item \textbf{Over-confidence.} When retrieved chunks appear superficially relevant, Self-Route prematurely concludes that RAG suffices, even if the query requires \emph{distributed reasoning} across multiple parts of the document. In Case~1, it ignored the global cue ``repeatedly'' and thus failed to capture the need for long-context integration.
  \item \textbf{Under-confidence.} Conversely, when the evidence is present but takes the form of a \emph{negative premise check} (``not discussed'') or a \emph{keyword-localized fact} (e.g., a single time point), Self-Route misclassifies the situation as ``unanswerable.'' This leads to unnecessary escalation to long-context models, wasting cost and providing no additional benefit (Cases~2 and~3).
\end{itemize}

In other words, Self-Route lacks a calibrated notion of when retrieved evidence is \emph{sufficient} versus when it is \emph{incomplete}. Our router, by relying on lightweight meta signals (question form, genre, and evidence distribution cues), avoids these extremes: it dispatches \emph{distributed} queries to Long-Context (Case~1), while keeping \emph{localized} or \emph{premise-check} queries on RAG (Cases~2 and~3).
\begin{tcolorbox}[
  title=Case Study 1: ``Why does Stella repeatedly think their father will `kill' Miles?'',
  colback=gray!5,
  colframe=black,
  colbacktitle=black,
  coltitle=white,
  fonttitle=\bfseries,
  breakable,
  enhanced jigsaw
]
\textbf{What was asked?}\\
Explain \emph{why} Stella keeps thinking this (note the word \emph{``repeatedly''}).

\textbf{Where is the evidence?}\\
Scattered across the story: repeated expulsions (\emph{again}), strict family expectations, and the fear of harsh punishment.

\textbf{What did the two answering strategies say?}
\begin{itemize}[leftmargin=1.2em]
  \item \textbf{RAG (picked by Self-Route):} Gave the short cause: Miles was expelled $\Rightarrow$ father angry/punish. \emph{Missed} the ``\emph{again}'' and "high expectations". \textbf{Score: 3}.
  \item \textbf{Long-Context (picked by our router):} Added the missing parts: expelled \emph{again}, family expectations, and "angry + \emph{disappointed}". \textbf{Score: 4}.
\end{itemize}

\textbf{Who routed what? Why?}
\begin{itemize}[leftmargin=1.2em]
  \item \textbf{Self-Route} chose \textbf{RAG} because retrieved chunks looked answerable.
  \item \textbf{Our router} chose \textbf{Long-Context} because \emph{``repeatedly''} means the proof is spread out.
\end{itemize}

\textbf{Why did Self-Route miss?}\\
It treated ``retrieved chunks can answer'' as enough and ignored the global signal \emph{``repeatedly''}.

\textbf{Why our routing was correct.}
Using meta signals only: the question includes \emph{``repeatedly''}, the genre is long-form narrative, and the document fits the window. These cues imply evidence is \emph{distributed} across scenes, so \textbf{Long-Context} is the appropriate choice.

\end{tcolorbox}

\begin{tcolorbox}[
  title=Case Study 2: ``Did the paper discuss ReLU-at-classification for NLP?'',
  colback=gray!5,
  colframe=black,
  colbacktitle=black,
  coltitle=white,
  fonttitle=\bfseries,
  breakable,
  enhanced jigsaw
]
\textbf{What was asked?}\\
Check whether \emph{paper 1} talks about using ReLU as a \emph{classification layer} for \emph{NLP}, compared with traditional models.

\textbf{Where is the evidence?}\\
Right in the abstract/introduction: the paper runs on \textbf{MNIST, FashionMNIST, WDBC} (image/biomedical), \emph{not NLP}.

\textbf{Ground truth.}\\
``\emph{Paper 1 does not address the impact \dots\ on NLP tasks.}''

\textbf{What did the two answering strategies say?}
\begin{itemize}[leftmargin=1.2em]
  \item \textbf{RAG:} Said the paper \emph{does not discuss} NLP (correct). \textbf{Score: 4}.
  \item \textbf{Long-Context:} Same conclusion (also correct). \textbf{Score: 4}.
\end{itemize}

\textbf{Who routed what? Why?}
\begin{itemize}[leftmargin=1.2em]
  \item \textbf{Self-Route} sent it to \textbf{Long-Context} after treating checking retrieved chunks and made``not discussed'' as ``unanswerable''.
  \item \textbf{Our router} picked \textbf{RAG} because this is a \emph{premise check}; the abstract is enough.
\end{itemize}

\textbf{Why did Self-Route miss?}\\
It confused a valid negative answer (\emph{``not discussed''}) with ``unanswerable'', and used more context than needed.

\textbf{Why our routing was correct.}
Based solely on meta information: the query is a \emph{premise check} ("did it discuss …?"), and such evidence is typically \emph{localized} in the abstract/introduction. Under these cues and the "prefer RAG on ties" rule, \textbf{RAG} is recommended for efficiency.

\end{tcolorbox}

\begin{tcolorbox}[
  title=Case Study 3: ``On April 2 when did the whole view fall into darkness?'',
  colback=gray!5,
  colframe=black,
  colbacktitle=black,
  coltitle=white,
  fonttitle=\bfseries,
  breakable,
  enhanced jigsaw
]
\textbf{What was asked?}\\
A single time point on April 2.

\textbf{Where is the evidence? (verbatim)}\\
``\emph{\dots until, \textbf{at five minutes before seven}, the whole surface in view was enveloped in the darkness of night.}''

\textbf{What did the two answering strategies say?}
\begin{itemize}[leftmargin=1.2em]
  \item \textbf{RAG:} ``\emph{at five minutes before seven (6:55)}'' \textbf{Score: 4}.
  \item \textbf{Long-Context:} ``\emph{At five minutes before seven.}'' \textbf{Score: 4}.
\end{itemize}

\textbf{Who routed what? Why?}
\begin{itemize}[leftmargin=1.2em]
  \item \textbf{Self-Route} escalated to \textbf{Long-Context} after saying ``unanswerable''.
  \item \textbf{Our router} chose \textbf{RAG} because this is a \emph{keyword-localizable} fact (date/event/time).
\end{itemize}

\textbf{Why did Self-Route miss?}\\
Even though the answer is in retrieved chunks, somehow Self-Route failed to give answer.

\textbf{Why our routing was correct.}
From meta-only cues: the query asks for a \emph{single time point} with clear date/time keywords that usually appears in \emph{one sentence}. These signals favor \textbf{RAG} for precise, keyword-localizable extraction.

\end{tcolorbox}

\FloatBarrier

\newpage

\subsection{Comparison with Traditional ML Routers}
\label{app:ml}

To isolate the contribution of LLM-based structured reasoning from the predictive value of metadata alone, we train Decision Tree and Random Forest classifiers using the \textbf{same metadata features} as Pre-Route (document length, question length, task type, document type, difficulty level) on the LaRA training set. Results are averaged across 3 answer models (qwen3-4b, qwen3-235b-a22b, deepseek-r1-0528).

\begin{table}[h]
\centering
\caption{Pre-Route SFT vs.\ traditional ML classifiers on LaRA. All methods use identical metadata features. Results averaged across 3 answer models.}
\label{tab:ml-baseline}
\resizebox{\linewidth}{!}{%
\begin{tabular}{lccc}
\toprule
Method & QA Score & LC Rate & Accuracy \\
\midrule
\textbf{Pre-Route SFT} & \textbf{3.256} & \textbf{0.229} & \textbf{0.708} \\
Random Forest (n=8) & 3.106 & 0.317 & 0.641 \\
Decision Tree (depth=6) & 3.014 & 0.414 & 0.572 \\
\bottomrule
\end{tabular}}
\end{table}

Pre-Route outperforms ML baselines by +6.7--13.6\% accuracy over Random Forest and Decision Tree respectively, with lower LC Rate (22.9\% vs.\ 31.7--41.4\%). This shows that: (1) Metadata contains useful routing signals (ML achieves 57--64\% accuracy). (2) Distilled knowledge provides additional value---the +6.7--13.6\% gain comes from structured reasoning captured from large models. (3) ML models struggle on LongBench-v2 because its metadata categories (task\_type, doc\_type) are disjoint from LaRA's, while Pre-Route leverages LLM semantic understanding for zero-shot transfer.

\begin{figure*}[h]
  \centering
  \begin{minipage}{0.65\linewidth}
    \centering
    \includegraphics[width=\linewidth]{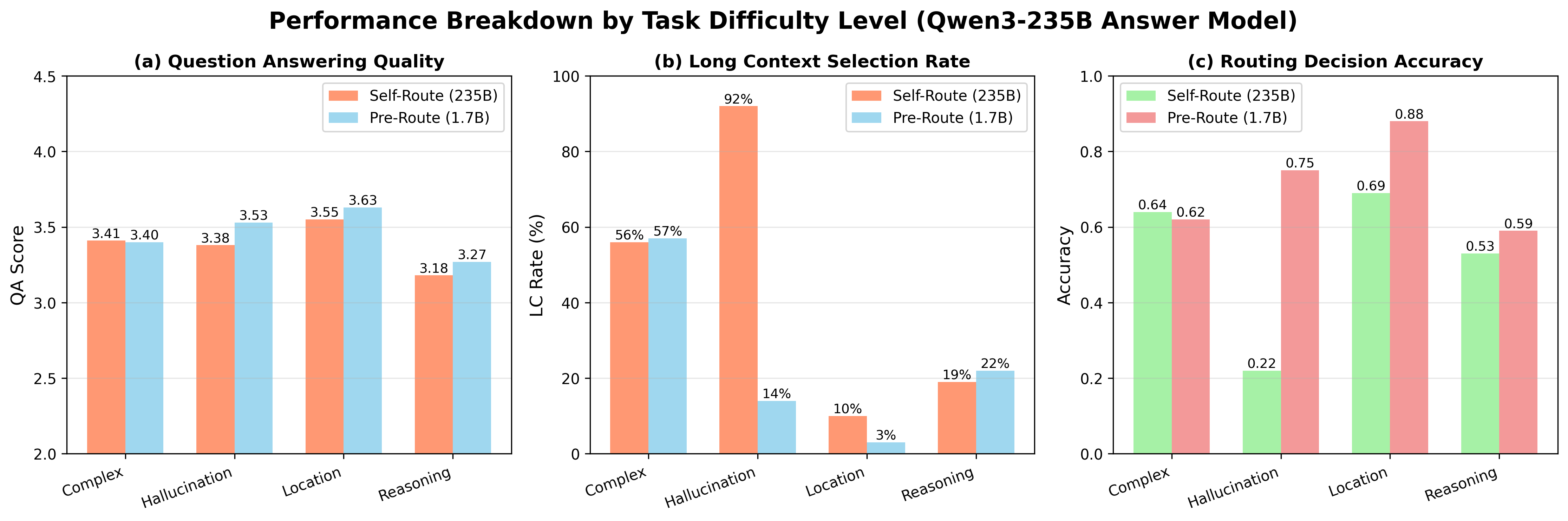}
  \end{minipage}
  \hfill
  \begin{minipage}{0.28\linewidth}
    \centering
    \includegraphics[width=\linewidth]{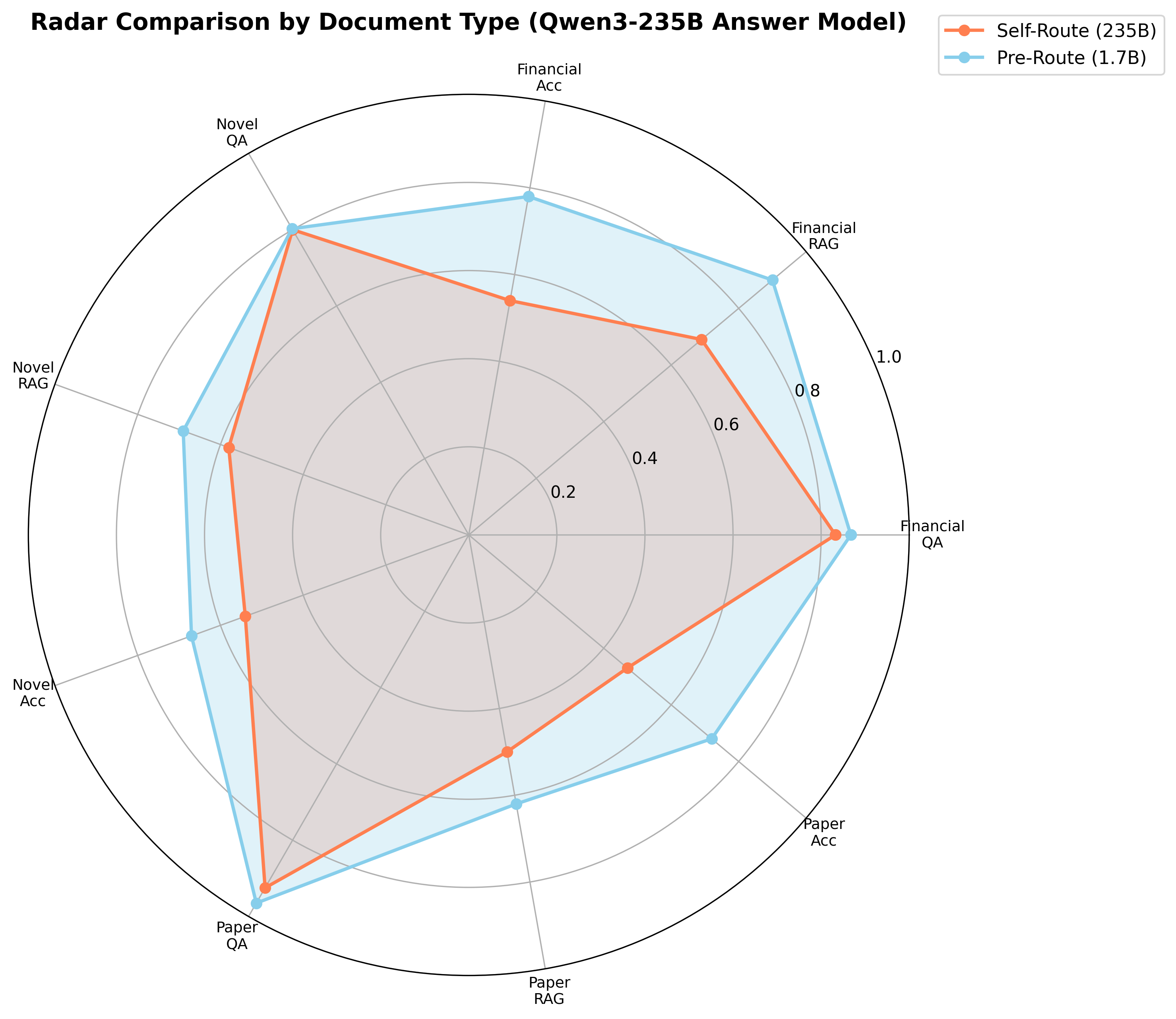}
  \end{minipage}
  \caption{Left: QA Score and Route Accuracy breakdown by difficulty level. Right: Radar chart comparing Self-Route and Pre-Route across task types (Qwen3-235B answer model).}
  \label{fig:breakdown-level}
  \label{fig:radar-comparison}
  \end{figure*}
  
\subsection{Performance Breakdown by Document Type and Difficulty}
\label{app:breakdown}

\paragraph{By document type.}
Table~\ref{tab:breakdown-doc} shows Pre-Route's advantage across document types (Paper, Financial, Novel). Improvements are consistent, with the largest QA gains on Paper (+0.16) and Financial (+0.14) documents.

\begin{table}[h]
\centering
\caption{Performance breakdown by document type (Qwen3-235B answer model, D-Q1.7B router).}
\label{tab:breakdown-doc}
\resizebox{\linewidth}{!}{%
\begin{tabular}{lcccccc}
\toprule
& \multicolumn{2}{c}{Self-Route} & \multicolumn{2}{c}{Pre-Route} & & \\
\cmidrule(lr){2-3} \cmidrule(lr){4-5}
Doc Type & QA & Acc & QA & Acc & $\Delta$QA & $\Delta$Acc \\
\midrule
Paper & 3.70 & 0.47 & 3.86 & 0.72 & \textbf{+0.16} & \textbf{+0.25} \\
Financial & 3.33 & 0.54 & 3.47 & 0.78 & \textbf{+0.14} & \textbf{+0.24} \\
Novel & 3.20 & 0.54 & 3.21 & 0.67 & +0.01 & \textbf{+0.13} \\
\bottomrule
\end{tabular}}
\end{table}

\paragraph{By difficulty level.}
Table~\ref{tab:breakdown-diff} reveals where Pre-Route's advantage concentrates. The most striking finding is on Hallucination tasks: Self-Route's accuracy drops to 0.22 (vs.\ Pre-Route's 0.75), indicating its failure-driven ``unanswerable'' trigger is particularly unreliable for hallucination-sensitive queries. On Location tasks, Pre-Route achieves 0.88 accuracy, confirming that structured reasoning correctly identifies keyword-localizable queries that RAG can handle efficiently.

\begin{table}[h]
\centering
\caption{Performance breakdown by difficulty level (Qwen3-235B answer model, D-Q1.7B router).}
\label{tab:breakdown-diff}
\resizebox{\linewidth}{!}{%
\begin{tabular}{lcccccc}
\toprule
& \multicolumn{2}{c}{Self-Route} & \multicolumn{2}{c}{Pre-Route} & & \\
\cmidrule(lr){2-3} \cmidrule(lr){4-5}
Level & QA & Acc & QA & Acc & $\Delta$QA & $\Delta$Acc \\
\midrule
Hallucination & 3.38 & 0.22 & 3.53 & 0.75 & \textbf{+0.15} & \textbf{+0.53} \\
Location & 3.55 & 0.69 & 3.63 & 0.88 & +0.08 & \textbf{+0.19} \\
Reasoning & 3.18 & 0.53 & 3.27 & 0.59 & +0.09 & +0.06 \\
Complex & 3.41 & 0.64 & 3.40 & 0.62 & -0.01 & -0.02 \\
\bottomrule
\end{tabular}}
\end{table}

\subsection{Small Model Error Analysis}
\label{app:error-analysis}

We analyze samples where the 235B router succeeds but the prompt-only 1.7B router fails, revealing the main error patterns for small models:

\textbf{Over-conservatism (74.3\% of errors):} For simple location or factual queries, the 1.7B model is intimidated by long documents or technical jargon, safely but inefficiently defaulting to LC when RAG is perfectly sufficient. By difficulty level, these over-conservative errors concentrate on Location (46.7\%) and Hallucination (29.4\%) tasks.

\textbf{Root cause:} The fundamental gap lies in global semantic understanding vs.\ surface heuristics. The 235B model understands the query's deep structure and resists surface-level traps, while the 1.7B model relies on shallow cues (document length, jargon presence). This capability gap explicitly justifies our distillation pipeline: it transfers stable decision boundaries to small models, rather than relying on their brittle zero-shot reasoning.

\subsection{Training Details}
\label{app:training}

The distilled router (Qwen3-1.7B) is trained with the following hyperparameters:
\begin{itemize}[nosep]
\item Optimizer: AdamW (lr=2e-5, weight\_decay=0.01)
\item Schedule: Cosine LR with 10\% warmup steps
\item Epochs: 6
\item Batch size: 8 (gradient accumulation=4)
\item Max sequence length: 4000 tokens
\item Precision: BF16
\end{itemize}

\subsection{Reproducibility Statement}
\label{app:reproduce}
We have made efforts to ensure reproducibility. The main paper describes the proposed method, model configurations, and evaluation protocols. Additional training details, hyperparameters, and extended results are provided in the appendix.

\subsection{Example Prompts}
We provide the full prompts used in our experiments.
Figs.~\ref{fig:self_route_prompt_lara} and \ref{fig:self_route_prompt} shows the original Self-Route prompt, while Figs~\ref{fig:hallu_eval_prompt} and \ref{fig:standard_eval_prompt} present evaluation prompts adapted from LaRA for hallucination detection and general answer comparison.
Finally, Fig.~\ref{fig:complete_decision_prompt} gives the complete Pre-Route prompt.

\label{app:prompts}
\vspace{-10pt}
\begin{figure*}[h]
\centering
\begin{tcolorbox}[
  title=Complete Prompt for Self-Route,
  colback=gray!5,
  colframe=black,
  colbacktitle=black,
  coltitle=white,
  fonttitle=\bfseries,
  breakable,
  enhanced jigsaw,
  width=\linewidth
]

\begin{lstlisting}[breaklines=true, breakatwhitespace=true, basicstyle=\ttfamily, columns=flexible, lineskip=2pt, aboveskip=0pt, belowskip=0pt, language=]
Answer the question based on the given passages. Only give me the answer and do not output any other words. If the question cannot be answered based on the information in the article, write "unanswerable". The following are given passages. {rag_result} Question: {query} Answer:
\end{lstlisting}
\end{tcolorbox}
\caption{The complete Self-Route Prompt for LaRA. Taken directly from its original paper~\citep{DBLP:conf/emnlp/Li00MB24selfroute}}
\label{fig:self_route_prompt_lara}
\end{figure*}
\vspace{-22pt}
\begin{figure*}[h]
\centering
\begin{tcolorbox}[
  title=Complete Prompt for Self-Route,
  colback=gray!5,
  colframe=black,
  colbacktitle=black,
  coltitle=white,
  fonttitle=\bfseries,
  breakable,
  enhanced jigsaw,
  width=\linewidth
]

\begin{lstlisting}[breaklines=true, breakatwhitespace=true, basicstyle=\ttfamily, columns=flexible, lineskip=2pt, aboveskip=0pt, belowskip=0pt, language=]
Read the following text and answer briefly. {rag_chunks} Now, answer the following question based on the above text, only give me the answer and do not output any other words. If the question cannot be answered based on the information in the article, write "unanswerable". Question: {query} Answer:
\end{lstlisting}
\end{tcolorbox}
\caption{The complete Self-Route Prompt for Longbench-v2 multiple-choice. Taken directly from its original paper~\citep{DBLP:conf/emnlp/Li00MB24selfroute}}
\label{fig:self_route_prompt}
\end{figure*}

\begin{figure*}[h]
\centering
\begin{tcolorbox}[
  title=Evaluation Prompt for Hallucination Detection (Adapted from LaRA),
  colback=gray!5,
  colframe=black,
  colbacktitle=black,
  coltitle=white,
  fonttitle=\bfseries,
  breakable,
  enhanced jigsaw,
  width=\linewidth
]
\begin{lstlisting}[breaklines=true, breakatwhitespace=true, columns=flexible, lineskip=1pt, aboveskip=0pt, belowskip=0pt, language=]
You will be given a question, a groundtruth answer, and two answers from AI assistants (Assistant A and Assistant B).
This question may not be directly answerable from the source text.
Your task:
1. Check whether each assistant is hallucinating. If the answer is **consistent** with the groundtruth, and the question is **not present in the original text**, it is considered **not hallucinating**.
2. For each answer, assign a quality score from 1 to 4:
   - 4 = Fully correct (not hallucinating and consistent)
   - 3 = Mostly correct (minor detail missing or uncertain)
   - 2 = Partially correct (some correct points, some hallucination)
   - 1 = Hallucinated, irrelevant, or incorrect
3. Write a brief comparative analysis.
4. Decide who is better: "A", "B", or "Tie"
Return JSON in this format:
{
  "analysis": "...",
  "score_a": int,
  "score_b": int,
  "better": "A" | "B" | "Tie"
}
[Question]
{query}
[Groundtruth Answer]
{label}
[Assistant A's Answer]
{pred_a}
[Assistant B's Answer]
{pred_b}
Start your evaluation:
\end{lstlisting}
\end{tcolorbox}

\newpage
\caption{An adapted prompt from the LaRA benchmark, specifically refined for evaluating hallucination. It provides a detailed scoring rubric to distinguish between factual consistency and fabrication.}
\label{fig:hallu_eval_prompt}
\end{figure*}

\begin{figure*}[h]
\centering
\begin{tcolorbox}[
  title=Standard Evaluation Prompt for Answer Comparison (Adapted from LaRA),
  colback=gray!5,
  colframe=black,
  colbacktitle=black,
  coltitle=white,
  fonttitle=\bfseries,
  breakable,
  enhanced jigsaw,
  width=\linewidth
]
\begin{lstlisting}[breaklines=true, breakatwhitespace=true, language=]
You will be given a question, a groundtruth answer, and two AI assistant answers.

Your task:
1. Judge each answer's factual accuracy and completeness relative to the groundtruth.
2. For each answer, assign a score from 1 to 4:
   - 4 = Fully correct (covers all key points, no factual or logical flaws)
   - 3 = Mostly correct (core is right, minor flaws or omissions)
   - 2 = Partially correct (some accurate parts, but notable errors or gaps)
   - 1 = Incorrect or hallucinated (serious flaws, irrelevant, fabricated)
3. Provide a brief comparison explaining your scoring.
4. Choose which assistant is better: "A", "B", or "Tie"

Return your result in the following JSON format:
{
  "analysis": "...",
  "score_a": int,
  "score_b": int,
  "better": "A" | "B" | "Tie"
}

[Question]
{query}

[Groundtruth Answer]
{label}

[Assistant A's Answer]
{pred_a}

[Assistant B's Answer]
{pred_b}

Start your evaluation:
\end{lstlisting}
\end{tcolorbox}
\caption{A general-purpose evaluation prompt adapted from LaRA for detailed comparison. This version focuses on factual accuracy, completeness, and overall quality against a groundtruth answer.}
\label{fig:standard_eval_prompt}
\end{figure*}

\newpage

\begin{figure*}[h]
\vspace{-12pt}
\centering

\begin{tcolorbox}[
  title=Complete Prompt for Pre-Route,
  colback=gray!5,
  colframe=black,
  colbacktitle=black,
  coltitle=white,
  fonttitle=\bfseries,
  breakable,
  enhanced jigsaw,
  width=\linewidth,
  top=1pt,
  bottom=1pt,
  left=2pt,
  right=2pt
]

\begin{lstlisting}[breaklines=true, breakatwhitespace=true, basicstyle=\scriptsize, columns=flexible, lineskip=2pt, aboveskip=0pt, belowskip=0pt]
## Inputs
- query: "{query}"
- task_type: {task_type}
- document_title: "{doc_title}"
- document_type: {doc_type}
- document_length: {length_str} ({doc_len_tokens} tokens)
- answering_model: {model}
- answering_max_window: {max_window_tokens} tokens
- document_fits_window: {document_fits}
- document_head content: "{doc_head}..."
## RAG Configuration
- chunk_size: {chunk_size} tokens
- chunk_overlap: {chunk_overlap} tokens
- embed_model: {embed_model}
- rerank_model: {rerank_model}
- vector_ratio: {vector_ratio}
- rerank_size: {rerank_size}
## Instructions
You are tasked with choosing the most appropriate strategy -- **RAG**, **LONG_CONTEXT** -- for answering the user query, based on the characteristics of the query, document, and model.
Please complete the `<thinking>` block below. In each step, provide a clear judgment and explain how it affects your strategy choice.
1. `<step1>`: Identify the question type (e.g., factual, reasoning, comparison, judgment, etc.) and the document type (e.g., book, article, report). How do these affect the need for deep context understanding or precise retrieval?
2. `<step2>`: Assess whether the relevant information is likely concentrated in one part of the document or scattered across multiple sections. How does this affect strategy selection?
3. `<step3>`: Evaluate whether the document can fully fit into the context window, based on `document_fits_window` above. If not, how does that impact strategy choice?
4. `<step4>`: Consider whether the query can be answered through keyword-based retrieval (e.g., names, dates), or requires synthesizing implicit logic, analogies, or multi-part reasoning.
5. `<step5>`: Reflect on the model being used (e.g., {model}). Consider both its context window size and model capacity (parameters). Although some models may have large context windows, smaller models may still struggle with effective long-context reasoning due to limited capacity. How does this influence your strategy recommendation?
6. `<step6_efficiency>`: Compare **expected efficiency** of RAG vs LONG_CONTEXT: expected context size, retrieval selectivity, latency, and cost. If quality is likely similar, which strategy is more efficient?
7. `<reflection>`: Based on your reasoning above, state which strategy is more suitable overall -- **RAG**, **LONG_CONTEXT** -- and explain why.
8. `<decision>`: Write your final strategy choice clearly as either `RAG`, `LONG_CONTEXT`.
### Decision Rules
- If both strategies are **equally suitable** or quality difference is **negligible/uncertain**, **prefer RAG for efficiency**.
- Prefer **LONG_CONTEXT** only if (a) the document **fits** in the window **and** (b) the query requires **global, cross-section synthesis** that retrieval would likely miss.
- Prefer **RAG** when the document **does not fit** the window, or when **high-precision snippet retrieval** is likely effective (e.g., names, dates, localized facts), or when **efficiency** is a concern and quality is similar.
## Output Format
<thinking>
  <step1>...</step1>
  <step2>...</step2>
  <step3>...</step3>
  <step4>...</step4>
  <step5>...</step5>
  <step6_efficiency>...</step6_efficiency>
  <reflection>...</reflection>
  <decision>...</decision>
</thinking>
\end{lstlisting}

\end{tcolorbox}

\caption{The complete Pre-Route Prompt.}
\label{fig:complete_decision_prompt}

\end{figure*}

\end{document}